\newcommand{\Bb}{\mathcal B}
\newcommand{\C}{\mathbb C}
\newcommand{\Cc}{\mathcal C}
\newcommand{\kl}{\mathrm{kl}}
\newcommand{\KL}{\mathrm{K\!L}}
\newcommand{\E}{\mathbb{E}}
\newcommand{\kron}{\delta}
\newcommand{\Pp}{\mathbb{P}}
\newcommand{\PR}{\mathcal{P}}
\newcommand{\Q}{\mathcal{Q}}
\newcommand{\R}{\mathbb{R}}
\newcommand{\V}{\mathbb{V}}
\newcommand{\X}{\mathcal{X}}
\newcommand{\Y}{\mathcal{Y}}
\newcommand{\emme}{\mathfrak{m}}
\newcommand{\esse}{\mathfrak{s}}
\DeclareMathOperator*{\argmax}{argmax}
\DeclareMathOperator*{\diag}{diag}
\DeclareMathOperator*{\erf}{erf}
\DeclareMathOperator*{\Id}{Id}
\DeclareMathOperator*{\ReLU}{ReLU}
\newenvironment{manualprop}[1]{%
  \manpropin
}{\endmanpropin}
\newenvironment{proofsketch}{%
  \proof}{\endproof}
\title[Wide stochastic networks]{Wide stochastic networks: Gaussian limit and PAC-Bayesian training}
\begin{document}

\maketitle

\begin{abstract}%
The limit of infinite width allows for substantial simplifications in the analytical study of over-parameterised neural networks. With a suitable random initialisation, an extremely large network exhibits an approximately Gaussian behaviour. In the present work, we establish a similar result for a simple stochastic architecture whose parameters are random variables, holding both before and during training.
The explicit evaluation of the output distribution allows for a PAC-Bayesian training procedure that directly optimises the generalisation bound.
For a large but finite-width network, we show empirically on MNIST that this training approach can outperform standard PAC-Bayesian methods.
\end{abstract}

\begin{keywords}%
  Infinite width; Gaussian limit; PAC-Bayes; Stochastic networks.
\end{keywords}

\section{Introduction}
In recent years, overparameterised artificial neural networks with millions of nodes have shown  remarkably good generalisation capabilities. This behaviour contradicts the traditional well-rooted belief that overfitting is unavoidable when the trainable parameters far outnumber the size of the training dataset. It also highlights how the complexity bounds from classical statistical learning theory \citep{vapnik00,Bousquet2004, shalevBook2014} are manifestly inadequate tools to assess the generalisation properties of modern neural architectures \citep{zhang2017understanding}. As a consequence, the last couple of decades have seen the flourishing of novel results and techniques, aiming to explain the undeniable success of overparameterised models. 

A large number of trainable parameters makes the direct study of a network's training dynamics extremely challenging. However, things become more manageable when approximations are made, as is the case in the limit of infinite width \citep{neal, samuel, yang2019tensor_i, hayou, lee_wide_nn_ntk, sirignano2019mean, debortoli2020quantitative, hayou2021stable}. For a fully-connected feed-forward network, this limit consists in assuming that each layer includes an infinite number of nodes, while alternative definitions of \textit{width} allow for extensions of this idea to encompass a vast range of architectures \citep{yang2019tensor_i}. Although unachievable in practice, infinitely wide networks feature the interesting property of behaving like Gaussian processes at initialisation, when all the parameters are suitably randomly initialised. This fact enable us to capture the output's behaviour of large (but finite-size) models, both before \citep{matthews, lee_gaussian_process} and during the training \citep{jacot}. 

In this work, we establish a similar asymptotic result for a simple stochastic architecture, featuring a single hidden layer. For a \textit{stochastic} network, the randomness is not limited to the initialisation but is intrinsic in the parameters, which are treated as random variables. Specifically, here we assume that each parameter follows an independent normal distribution. As the architecture's width approaches infinity, we show that the network's output becomes Gaussian, with mean and covariance that can be derived from the means and standard deviations of the random parameters. We also show that under a lazy-regime assumption, where the parameters stay close to their initial values, this Gaussian behaviour is preserved throughout the training.

Part of the interest in studying stochastic networks is their role in the context of \textit{learning with guarantees}, where the goal is to provide an upper-bound on the generalisation error without making use of any held-out test dataset. For long, in the overparameterised regime tight bounds could only be achieved under strong, and often 
hardly verifiable, hypotheses \citep{allenzhu2020learning}. However, some promising non-vacuous results have been recently obtained by applying PAC-Bayesian methods to the training of stochastic classifiers \citep{dziugaite2017computing, Zhou2019NonvacuousGB, perezortiz2021tighter, pmlr-v162-biggs22a, clerico22aistats}.

The PAC-Bayesian theory originated from the seminal work of \cite{ShaweTaylor1997APA}, \cite{shawetaylor1998}, and \cite{mcall-pac-bayesian, mcallester}. We refer to \cite{catoniPAC} for an extensive monograph on the topic, and to \cite{guedj2019primer} and \cite{alquier2021user} for recent introductory overviews. It is a framework that provides upper bounds on the expected generalisation error of stochastic classifiers, with high probability over the random draw of the training dataset. The underlying idea is that if the distribution of the network's parameters does not change much during the training, then the learnt model should not be prone to overfit. 

We call PAC-Bayesian training a procedure that aims to optimise a PAC-Bayesian bound. Often this optimisation cannot be tackled directly, as the distribution of the network's output is unknown, and one needs to sample multiple realisations of the stochastic parameters \citep{dziugaite2017computing, perezortiz2021tighter}. In this paper, we propose to train a shallow wide stochastic network by exploiting the fact that it has an approximately Gaussian output. Notably, this approach allows for the direct optimisation of PAC-Bayesian bounds, even when a non-differentiable loss function, such as the 01-loss, is considered. We show empirically that the procedure that we present can bring tighter bounds and outperform standard PAC-Bayesian training methods. 

As a final remark, it is worth mentioning that this is not the first work suggesting to exploit the output's Gaussianity to train a stochastic network. For instance, \cite{alquier-ridgway-chopin} uses a similar approach, but limited to a linear model for binary classification. Also, \cite{clerico22aistats} built on a preprint of the current paper to develop a Gaussian training method for multilayer architectures. 
\section{Stochastic networks}
We consider a simple network $\R^p\to \R^q$, consisting of a single hidden layer made of $n$ nodes:
\begin{equation}\label{eq:net}F(x) = W^1\,\phi(W^0 x)\,,\end{equation}
where $W^1$ is a $q\times n$ matrix, $W^0$ a $n\times p$ matrix, and $\phi$ the activation function. The network is stochastic. This means that $W^0$ and $W^1$ are random variables and each time a new input is fed to the network a new realisation of them is used to evaluate the output. Concretely, we let 
$$W^1_{ij} = \tfrac{1}{\sqrt n}(\esse^1_{ij} \zeta^1_{ij} + \emme^1_{ij})\,;\qquad W^0_{jk} = \tfrac{1}{\sqrt p}(\esse^0_{jk} \zeta^0_{jk} + \emme^0_{jk})\,,$$
where $(\zeta^1_{ij})_{i=1\dots q}^{j=1\dots n}$ and $(\zeta^0_{jk})_{j=1\dots n}^{k=1\dots p}$ are independent families of iid standard normal random variables. We will henceforth call hyper-parameters the means $\emme$ and the standard deviations $\esse$, which are deterministic quantities when conditioned on their values at initialisation (possibly random).

We are interested in the infinite-width limit of large $n$. We aim at showing that, as $n\to\infty$, for each fixed input $x$ the network's output $F(x)$ converges to a multivariate normal, whose covariance matrix $Q(x)\in\R^q\times\R^q$ and mean vector $M(x)\in\R^q$ are deterministic functions of the hyper-parameters $\emme$ and $\esse$. In short, for any fixed input $x$, we want to establish that
\begin{equation}\label{eq:Gauss_F}
    F(x) \to \mathcal N(M(x), Q(x))\,.\footnote{Clearly, to be rigourous one needs to specify which kind of convergence is intended; see Propositions \ref{prop:init} and \ref{prop:lazytrain}.}
\end{equation}
Note that, for two different inputs $x$ and $x'$, $F(x)$ and $F(x')$ are independent, as we assume that the stochastic parameters of the model are re-sampled every time that a new input is provided.

As a remark, by taking the limit $n\to\infty$ we mean considering a sequence of distinct networks of increasing widths, all initialised and trained in the same way. To be rigorous, one ought to add explicit superscripts $^{(n)}$ to the various quantities to stress their dependence on the network's width. So, one should actually write $F^{(n)}$, and say that its mean and covariance $M^{(n)}$ and $Q^{(n)}$ can be expressed in terms of $\emme^{(n)}$ and $\esse^{(n)}$. What we will show is that, for each $x$, $F^{(n)}(x)\to F(x)\sim\mathcal N(M(x), Q(x))$, where $M$ and $Q$ are the limits of $M^{(n)}$ and $Q^{(n)}$. However, we believe that stressing this explicit dependence on $n$ would result in an excessively heavy notation. Therefore, we will always omit the superscript $^{(n)}$, and we will freely speak of ``infinite-width limit'' of a network, with the understanding that this has to be intended as the limit of a sequence of networks.
\subsection{Infinite-width limit}
We start by focusing on the hidden layer, which we denote as $Y^0$. Its nodes can be expressed as
$$Y^0_j(x) = \sum_{k=1}^p W^0_{jk} x_k =\frac{1}{\sqrt p}\sum_{k=1}^p \esse^0_{jk} \zeta^0_{jk} x_k +\frac{1}{\sqrt p}\sum_{k=1}^p \emme^0_{jk} x_k\,,$$
for any fixed input $x\in\R^p$. As the $\zeta^0_{jk}$'s are iid standard Gaussian random variables, we have that
$$Y^0(x)\sim\mathcal N(M^0(x), Q^0(x))\,.$$
This means that $Y^0$ is a $n$-dimensional multivariate normal, with mean vector and covariance matrix given by
$$ M^0_j(x) =  \frac{1}{\sqrt p}\sum_{k=1}^p \emme^0_{jk} x_k\,;\qquad\qquad Q^0_{jj'}(x) =\kron_{jj'}\frac{1}{p}\sum_{k=1}^p(\esse^0_{jk}x_k)^2\,.$$
As $Q^0(x)$ is diagonal, all the components of $Y^0(x)$ are independent, and we can actually write 
\begin{equation}\label{eq:hidden_nodes}
    Y^0_j(x) = \sqrt{Q^0_{jj}(x)}\,\bar \zeta^0_j + M^0_j(x)\,,
\end{equation}
where the $\bar \zeta^0_j$'s are independent standard normals. 

Now, define the random variable
$$\Phi^0_j(x) = \phi(Y^0_j(x))\,.$$
Clearly, we have $F_i(x) =\sum_{j=1}^n W^1_{ij}\Phi^0_j(x)$. Expanding the components of $W^1$ we can write
$$F_i(x) = \frac{1}{\sqrt n}\sum_{j=1}^n \esse_{ij}^1\zeta_{ij}^1\Phi^0_j(x) +\frac{1}{\sqrt n}\sum_{j=1}^n \emme_{ij}^1 \Phi_j^0(x)\,.$$

For any fixed input $x$, in the limit $n\to\infty$, we have an infinite sum of independent random variables, which are not identically distributed. In order to establish the convergence to a multivariate normal distribution, we need to control the variance and some higher moment of these variables, and hence require that the hyper-parameters have the correct order of magnitude. This is the case when the network is suitably initialised, and the result remains true during the training, as long as the hyper-parameters stay close enough to their initial values.

Note that for any finite width $n$, we can explicitly evaluate the network's mean $M$ and covariance $Q$. For the mean, we have
\begin{equation}\label{eq:M}
M_i(x) = \E[F_i(x)] = \frac{1}{\sqrt n}\sum_{j=1}^n\emme_{ij}^1\E[\Phi_j^0(x)]\,.
\end{equation}
As for $Q(x)$, we have $Q_{ii'}(x) = \C_{ii'}[F(x)] = \E[F_i(x) F_{i'}(x)] - \E[F_i(x)]\E[F_{i'}(x)]$, which becomes
\begin{equation}\label{eq:Q}
Q_{ii'}(x) = \kron_{ii'}\frac{1}{n}\sum_{j=1}^n(\esse^1_{ij})^2\E[\Phi^0_j(x)^2] + \frac{1}{n}\sum_{j=1}^n\emme^1_{ij}\emme^1_{i'j}\V[\Phi^0_j(x)]\,,
\end{equation}
where we used the fact that the nodes of the hidden layer are independent and so the covariance of $\Phi^0(x)$ is diagonal. Once we will have established that the limit of infinite width leads to a Gaussian output, its mean and covariance will be given by the limit $n\to\infty$ of the above expressions.

We now state some rigorous results. The next proposition (see Appendix \ref{app:proofs} for the proof) builds on a central limit theorem for triangular arrays, due to \cite{bentkus:lyap_bound}.
\begin{proposition}\label{prop:bentkus}
For any fixed input $x$ and width $n$, define $M(x)$ and $Q(x)$ as in \eqref{eq:M} and \eqref{eq:Q}. Let $Z(x)\sim\mathcal N(M(x), Q(x))$ and denote as $\Cc$ the class of measurable convex subsets of $\R^q$. Let $F$ be defined as in \eqref{eq:net}. Then
$$\sup_{C\in\Cc}|\Pp(F(x)\in C) - \Pp(Z(x)\in C)| \leq \kappa q^{1/4} \frac{B(\emme, \esse)}{\sqrt n}\,,$$
where $\kappa< 4$ is an absolute constant and
$$B(\emme,\esse)\leq q^{1/2}\frac{\frac{1}{n}\sum_{j=1}^n\sum_{i=1}^q (2|\esse^1_{ij}|^3 + 8|\emme^1_{ij}|^3)\E[|\Phi^0_j(x)|^3]}{\left(\frac{1}{n}\sum_{j=1}^n\E[\Phi_j^0(x)^2]\min_{i=1\dots q}(\esse_{ij}^1)^2\right)^{3/2}}\,.$$
In particular, if $B(\emme,\esse)= o(\sqrt n)$ for $n\to\infty$, then $F(x)- Z(x)\to 0$, in distribution.
\end{proposition}
As a corollary of the above result, if the stochastic network acts as a classifier, its performance is related to the one of its Gaussian approximation.
\begin{corollary}\label{cor:loss_bound}
Assume that the network deals with a classification problem, where for each instance $x$ there is a single correct label $y=f(x)\in\{1\dots q\}$. With the notation of Proposition \ref{prop:bentkus}, for each fixed input $x\neq 0$, define as $\hat f(x) = \argmax_{i=1\dots q}F_i(x)$ and $\bar f(x) = \argmax_{i=1\dots q}Z_i(x)$. We have
$$|\Pp(\hat f(x) = f(x)) - \Pp(\bar f(x) = f(x))| \leq \kappa q^{1/4} \frac{B(\emme,\esse)}{\sqrt n}\,.$$
\end{corollary}
\begin{proof}
For each $k=\{1\dots q\}$, the set $\{z\in\R^q:z_k>\max_{i\neq k}z_i\}$ is convex. Hence the claim directly follows from Proposition \ref{prop:bentkus}.
\end{proof}
\subsection{Initialisation and lazy training}
With a suitable random initialisation of the hyper-parameters, and in a lazy training regime, we show that, as $n\to\infty$, our stochastic network has a Gaussian limit, in the sense that the quantity $B/\sqrt n$ of Proposition \ref{prop:bentkus} vanishes as $n\to\infty$. For simplicity, we shall assume that the activation function $\phi:\R\to\R$ is Lipshitz continuous (although we do not need to specify the Lipschitz constant).

We let all the network hyper-parameters be independently initialised in the following way:
\begin{equation}\label{eq:init}
\begin{aligned}
    &\emme_{jk}^0\sim\mathcal N(0, 1)\,;\qquad&&\emme_{ij}^1\sim\mathcal N(0, 1)\,;\\
    &\esse_{jk}^0=1\,;\qquad&&\esse_{ij}^1=1\,,
\end{aligned}
\end{equation}
For convenience we write $\hat\Pp$ for the probability measure representing the above initialisation, while $\Pp$ is the probability measure describing the intrinsic stochasticity of the network. These two sources of randomness are always assumed to be independent.
\begin{proposition}[Initialisation]\label{prop:init}
Consider a sequence of networks of increasing width initialised according to \eqref{eq:init}, and whose activation function $\phi$ is Lipshitz continuous. For any fixed input $x\neq 0$, defining $B$ as in Proposition \ref{prop:bentkus}, we have $\frac{B(\emme,\esse)}{\sqrt n}\to 0$, as $n\to\infty$, in probability with respect to the random initialisation $\hat\Pp$. More precisely, $B(\emme,\esse) = O(1)$ wrt $\hat\Pp$, as $n\to\infty$. In particular, at the initialisation the network tends to a Gaussian limit, in distribution wrt the intrinsic stochasticity $\Pp$ and in probability wrt $\hat\Pp$.
\end{proposition}
\begin{proofsketch}
The proof is deferred to Appendix \ref{app:proofs}. The main idea is that, since all the hyper-parameters are independent under \eqref{eq:init}, the standard central limit theorem yields that the upper-bound for $B$ stated in Proposition \ref{prop:bentkus} tends to a finite limit as $n\to\infty$.
\end{proofsketch}
The next proposition states that the limit will still be valid as long as the hyper-parameters do not move too much from their initialisation (lazy training). 
\begin{proposition}[Lazy training]\label{prop:lazytrain}
Fix a constant $J>0$ independent of $n$, and assume that $\phi$ is Lipshitz. For a network of width $n$, with initial configuration $(\widetilde\emme,\widetilde\esse)$ drawn according to $\hat\Pp$ as in \eqref{eq:init}, denote as $\Bb_J$ the ball
$$
\Bb_J = \left\{(\emme,\esse)\;:\quad\|\emme^0-\widetilde\emme^0\|^2_{F,2} + \|\emme^1-\widetilde\emme^1\|_{F,2}^2+\|\esse^0-\widetilde\esse^0\|^2_{F,2} + \|\esse^1-\widetilde\esse^1\|_{F,2}^2\leq J^2\right\}\,,
$$
where $\|\cdot\|_{F,2}$ denotes the 2-Frobenius norm of a matrix.
Let $B$ be defined as in Proposition \ref{prop:bentkus}. For any fixed input $x\neq 0$ we have $B(\emme, \esse)= O(1)$ as $n\to\infty$, uniformly on $\Bb_J$, in probability with respect to the random initialisation $\hat\Pp$.
\end{proposition}
\begin{proofsketch}
The proof is rather long and technical, and is deferred to Appendix \ref{app:proofs}. However, the idea is simple and consists in showing that, under the lazy training assumption $(\emme,\esse)\in\Bb_J$, $B$ undergoes a change of order $O(1)$ during the training. Since by Proposition \ref{prop:init} we know that $B$ is of order $O(1)$ at the initialisation, we can conclude.
\end{proofsketch}

In the next section, we will see that the lazy training constraint can be restated in terms of a bound on the Kullback-Leibler divergence between the initial and final distributions of the stochastic parameters. This fact will allow us to ensure that the constraint is satisfied when training the network to optimise a PAC-Bayesian objective.

\section{PAC-Bayesian framework}
Consider a standard classification problem, where to each instance $x\in\X\subseteq\R^p$ corresponds a unique correct label $y=f(x)\in\Y=\{1\dots q\}$. The goal is to build an algorithm that is able to find a good prediction of $y$ given $x$. We assume that the $x$'s are distributed according to some probability measure $\Pp_X$ on $\X$. To train our algorithm, we have access to a sample $S = (X_h)_{h=1\dots m}$, which is correctly labelled (for every $X_h\in S$ we know $f(X_h)$). Each $X_h$ is an independent draw from $\Pp_X$, so that $\Pp_S = \Pp_X^{\otimes m}$. We let $\ell$ be the 01-loss:
$$\ell(\hat y,y)=\begin{cases} 0&\text{if $\hat y = y$;}\\1 &\text{otherwise.}\end{cases}$$

We let $\hat f_w(x)$ denote the prediction for the instance $x$, for a network with parameter configuration $w$. The empirical loss $L_S(w)=\frac{1}{m}\sum_{x\in S}\ell(\hat f_w(x), f(x))$ is the average of the 01-loss on the training set, while the true loss is $L_X(w) = \E_X[\ell(\hat f_w(X), f(X))]$.

The PAC-Bayesian framework \citep{mcall-pac-bayesian, mcallester, catoniPAC, guedj2019primer, alquier2021user} deals with stochastic neural classifiers. We consider a prior probability measure $\PR$ on the random parameters, which has to be chosen independently of the specific realisation of the random dataset $S$ used for the training. After the training, the parameters will be described by a new probability measure $\Q$ (the so-called posterior), clearly $S$-dependent. The idea is that if $\PR$ and $\Q$ are not too ``far'' from each other, then the network will generalise well. 

Under the posterior, we define the expected true loss $L_X(\Q) = \E_{W\sim \Q}[L_X(W)]$ and the expected empirical loss $L_S(\Q) = \E_{W\sim \Q}[L_S(W)]$. The PAC-Bayesian bounds are upper bounds on $L_X(\Q)$, which hold with high probability on the random draw of the training set $S$. They usually involve the expected empirical error $L_S(\Q)$ and a divergence term in the form of the Kullback-Leibler divergence between $\Q$ and $\PR$: $\KL(\Q\|\PR) = \E_\Q[\log(\mathrm{d}\PR/\mathrm{d}\Q)]$. We will use the following result, due to \cite{langford_bounds} and \cite{maurer}. 

\begin{proposition}\label{prop:bound}
Fix a data-independent prior $\PR$. With probability higher than $1-\delta$ on the choice of the training set $S=(X_h)_{h=1\dots m}$\footnote{Here we assume that the training set $S$ has size $m\geq 8$.},
\begin{equation}\label{eq:bound}
    L_X(\Q)\leq \kl^{-1}\left(L_S(\Q)\bigg|\frac{\KL(\Q\|\PR)+\log\frac{2\sqrt m}{\delta}}{m}\right)\,,
\end{equation}
for any posterior $\Q$. Here, we have defined $\kl^{-1}(u|c) = \sup\{v\in[0,1]:\kl(u\|v)\leq c\}$, where $\kl(u\|v)=u\log\frac{u}{v}+ (1-u) \log\frac{1-u}{1-v}$.
\end{proposition}
We can hence devise the following training algorithm \citep{mcall-pac-bayesian}:
\begin{itemize}
    \item Fix $\delta\in(0,1)$ and a prior $\PR$ for the network stochastic parameters;
    \item Collect a sample $S$ of $m$ iid datapoints;
    \item Compute the optimal posterior $\Q$ minimising \eqref{eq:bound};
    \item Implement a stochastic network characterised by the law $\Q$.
\end{itemize}
In practice, in essentially any realistic scenario the algorithm above cannot be implemented. Hence, one has to simplify the problem requiring that $\PR$ and $\Q$ belong to some simple class of distributions. 
\subsection{PAC-Bayesian training}
Following the approach of \cite{dziugaite2017computing}, we assume that both $\PR$ and $\Q$ are multivariate normal distributions with diagonal covariance matrices. In other words, the random parameters of the network are independent normal random variables. For the posterior, $\emme$ and $\esse$ denote the $N$-dimensional vectors of the means and the standard deviations, while $\widetilde\emme$ and $\widetilde\esse$ refer to the prior. In short, $\PR = \mathcal N(\widetilde\emme, \diag(\widetilde\esse^2))$ and $\Q = \mathcal N(\emme, \diag(\esse^2))$. In this Gaussian setting, $\KL(\Q\|\PR)$ takes a simple form:
\begin{equation}\label{eq:KL_gauss}
    \KL(\Q\|\PR) = \frac{1}{2}\left(\sum_\alpha\left(\frac{\esse_\alpha}{\widetilde\esse_\alpha}\right)^2-N+\sum_\alpha\left(\frac{\emme_\alpha-\widetilde\emme_\alpha}{\widetilde\esse_\alpha}\right)^2 + 2\sum_\alpha\log\frac{\widetilde\esse_\alpha}{\esse_\alpha}\right)\,,
\end{equation}
where the index $\alpha$ runs over all the components of the hyper-parameters.

Now, the most troublesome term in \eqref{eq:bound} is  $L_S(\Q)$, which in general cannot be computed explicitly. However, we can obtain a Monte Carlo (MC) estimate $\hat L_S(\Q)$ of this quantity, by sampling a few realisations of the parameters from $\Q$.

Now, the idea is to perform a gradient descent (GD) optimisation on the PAC-Bayesian objective \citep{dziugaite2017computing, perezortiz2021tighter}. Note that \eqref{eq:KL_gauss} is differentiable with respect to $\emme$ and $\esse$ (which are the trainable hyper-parameters of the posterior). However, $\hat L_S(\Q)$ has a null gradient almost everywhere, as this is the case for $L_S(w)$ for each realisation $w$ used in the estimate. The standard way to overcome this issue is to use a surrogate of the 01-loss for the training, such as some variant of the cross-entropy \citep{dziugaite2017computing, perezortiz2021tighter}. Notably, although $\hat L_S(\Q)$ has a null gradient, this is not the case for $L_S(\Q)$ (see Section \ref{sec:01training} and Figure \ref{fig:stairs} for more details). Hence, if we know exactly the output's distribution of the stochastic network, we might be able to use the 01-loss directly without the need of any surrogate. This is indeed the case for the Gaussian limit, as we will see in the next section. In a similar spirit, \cite{alquier-ridgway-chopin} studied the training of a linear binary classifier with Gaussian parameters. 

It is worth mentioning that similar considerations hold when using an almost everywhere constant activation function to train a stochastic network. In this regard, \cite{pacbayeslinclass, letarte2020dichotomize, biggs2020differentiable} developed an interesting variant of PAC-Bayesian training for binary classifiers with the sign activation function ($\phi = \text{sign}$). In that setting, the simple form of the output of each layer allows for a more explicit expression of the distribution of the hidden nodes, which permits overcoming the fact that the binary activation function is non-differentiable. 
\section{PAC-Bayesian training in the Gaussian limit}\label{sec:PACGauss}
Instead of doing the standard PAC-Bayesian training with a surrogate loss, we can train our wide stochastic network by assuming that its Gaussian approximation is exact. However, once completed the training, we will need to evaluate the final bound without such an assumption.

At the initialisation, for a network initialised according to $\hat\Pp$ as in \eqref{eq:init}, the Gaussian approximation is asymptotically exact for large $n$. Moreover, the following lemma ensures that controlling the $\KL$ divergence is enough to claim that the network is in the lazy training regime of Proposition \ref{prop:lazytrain}. Hence, a wide stochastic network is asymptotically Gaussian throughout its PAC-Bayesian training.

\begin{lemma}
Define the multivariate Gaussian distributions $\PR = \mathcal N(\widetilde \emme, \diag(\widetilde \esse^2)) = \mathcal N(\widetilde \emme, \Id)$ and $\Q = \mathcal N(\emme, \diag(\esse^2))$ for the parameters of a stochastic network. We have
$$\|\emme^0-\widetilde\emme^0\|^2_{F,2} + \|\emme^1-\widetilde\emme^1\|_{F,2}^2+\|\esse^0-\widetilde\esse^0\|^2_{F,2} + \|\esse^1-\widetilde\esse^1\|_{F,2}^2\leq 2\KL(\Q\|\PR)\,.$$
\end{lemma}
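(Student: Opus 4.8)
The plan is to write out the closed form \eqref{eq:KL_gauss} for $\KL(\Q\|\PR)$, specialise it to the unit-variance prior $\widetilde\esse\equiv 1$ (as in the stated hypothesis $\PR=\mathcal N(\widetilde\emme,\Id)$, consistent with the initialization \eqref{eq:init}), and then bound it below by the left-hand side of the claim, term by term. First I would gather all the mean hyper-parameters of $W^0$ and $W^1$ into a single vector $\emme\in\R^N$, and likewise $\esse$, $\widetilde\emme$, $\widetilde\esse$, where $N$ is the total number of stochastic parameters, so that the left-hand side of the claimed inequality is simply $\sum_{i=1}^N(\emme_i-\widetilde\emme_i)^2+\sum_{i=1}^N(\esse_i-1)^2$.

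With $\widetilde\esse_i=1$ for every $i$, formula \eqref{eq:KL_gauss} becomes $2\KL(\Q\|\PR)=\sum_i\esse_i^2-N+\sum_i(\emme_i-\widetilde\emme_i)^2-2\sum_i\log\esse_i$. The mean-difference sums $\sum_i(\emme_i-\widetilde\emme_i)^2$ appear identically on both sides, so they cancel, and the claim reduces to $\sum_i(\esse_i-1)^2\le\sum_i\esse_i^2-N-2\sum_i\log\esse_i$. Expanding $(\esse_i-1)^2=\esse_i^2-2\esse_i+1$ and cancelling $\sum_i\esse_i^2$ leaves the equivalent statement $N\le\sum_i(\esse_i-\log\esse_i)$.

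This final inequality I would prove coordinatewise. The scalar function $g(s)=s-\log s$ on $(0,\infty)$ has $g'(s)=1-1/s$, which is negative on $(0,1)$, zero at $s=1$, and positive on $(1,\infty)$; hence $g$ has a global minimum $g(1)=1$ at $s=1$, so $\esse_i-\log\esse_i\ge1$ for every $i$. Summing over $i=1,\dots,N$ gives $\sum_i(\esse_i-\log\esse_i)\ge N$, which completes the argument. Equality holds precisely when $\esse\equiv\widetilde\esse\equiv 1$ and $\emme=\widetilde\emme$, i.e. $\Q=\PR$.

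I do not anticipate a genuine obstacle in this lemma; the only point that needs attention is to invoke the unit-variance prior \emph{before} carrying out the cancellations, since \eqref{eq:KL_gauss} holds for an arbitrary diagonal Gaussian prior and the claimed bound genuinely fails when the prior standard deviations differ from $1$ (the left-hand side then involves $(\esse_i-\widetilde\esse_i)^2$, which is not controlled by the $\KL$ term built from the ratios $\esse_i/\widetilde\esse_i$).
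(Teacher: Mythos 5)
Your proposal is correct and follows essentially the same route as the paper: write out $\KL(\Q\|\PR)$ explicitly via \eqref{eq:KL_gauss} with $\widetilde\esse\equiv 1$, cancel the mean terms, and reduce the variance part to the elementary scalar inequality $u^2-1-2\log u\geq (u-1)^2$ for $u>0$, which is exactly your coordinatewise statement $\esse_i-\log\esse_i\geq 1$ rearranged. The only difference is cosmetic (you verify the scalar inequality by minimising $s\mapsto s-\log s$ rather than quoting it directly), and your closing remark that the bound relies on the unit-variance prior is a correct and worthwhile observation.
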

\begin{proof}
From \eqref{eq:KL_gauss}, we conclude noticing that $u^2-1-2\log u\geq (u-1)^2$, for all $u> 0$.
\end{proof}

The rest of this section is organised as follows. First, we show that it is possible to get a non-zero gradient from the expected 01-loss in the Gaussian limit. Then, we discuss how to evaluate the gradients of the output's mean and covariance with respect to the hyper-parameters. Finally, we deal with how to obtain a rigorous PAC-Bayesian bound after the training.
\subsection{Training with the 01-loss}\label{sec:01training}
\begin{wrapfigure}{r}{0.35\textwidth}
\includegraphics[width=0.35\textwidth]{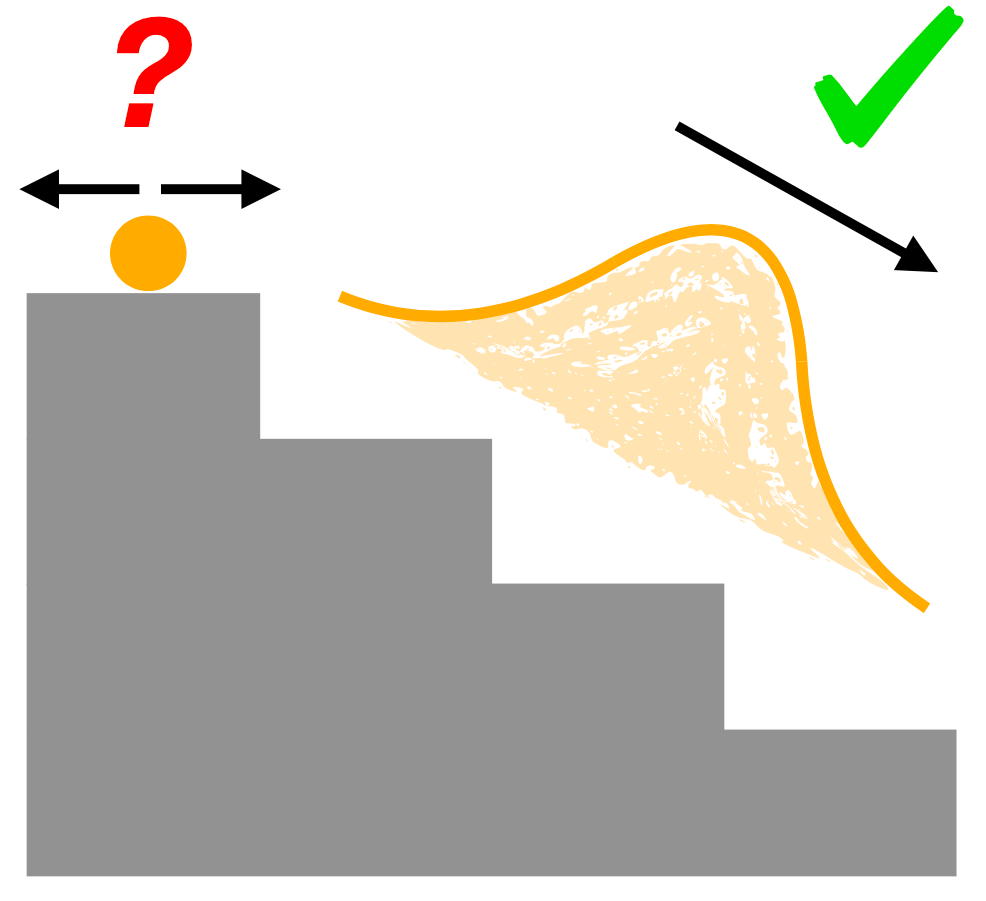}
\caption{When ``going down the stairs'' via GD, each single realisation lies on a horizontal step and has an uninformative null gradient, but the whole distribution has a global view of the stairs and can find the good direction.\\\vspace{-20pt}}
\label{fig:stairs}
\end{wrapfigure}
For $x\in\X\subseteq\R^p$, we want to find the correct $y=f(x)$ among $q$ possible labels $i = 1,\dots, q$. We consider a Gaussian network with output $F(x) \sim \mathcal N(M(x), Q(x))$, whose random prediction is $\hat f(x) = \argmax_{i=1\dots q}F_i(x)$. Denoting as $\ell$ the 01-loss, it is natural to aim at minimising $\E[\ell(\hat f(x), f(x))]$ (where the expectation is over the stochastic parameters), since this quantity is actually equal to the probability of making a mistake for $x$: $\Pp(\hat f(x)\neq f(x))$. As we want to tackle the problem by performing gradient descent optimisation, if we assume that we are able to differentiate $M(x)$ and $Q(x)$ with respect to the network trainable hyper-parameters, all we need is to evaluate $\nabla_M\E[\ell(\hat f(x), y)]$ and $\nabla_Q\E[\ell(\hat f(x), y)]$.

Note that $\ell(\hat f(x), y)$ has a null gradient almost everywhere for any random realisation of the network. For this reason, a non-stochastic network cannot be trained directly with the 01-loss. However, this is not the case for a stochastic network. The reason for that can be intuitively explained by thinking of what happens if we try to ``go down the stairs'' with GD, as illustrated in Figure \ref{fig:stairs}. A single realisation of the network will be a point on a horizontal step: there is no way to understand the right direction in order to go down. However, if we consider the whole stochastic distribution of the network, it spreads over all the steps, and it has a global view of the stairs. It is hence not surprising that the gradient of the expected loss is non-zero.

For binary classification tasks, the expected 01-loss reads $\E[\ell(\hat f(x),1)] = \Pp(F_2(x)>F_1(x))$ and $\E[\ell(F(x),2)] = \Pp(F_1(x)>F_2(x))$. 
These quantities can be computed exactly:\footnote{A similar result was already derived in \cite{alquier-ridgway-chopin} for a simpler linear classifier.}
\begin{align*}
    &\E[\ell(\hat f(x),1)] = \Pp_{\zeta\sim\mathcal N(0,1)}\left(\zeta>\frac{M_1(x)-M_2(x)}{\sqrt{Q_{11}(x) + Q_{22}(x) - 2Q_{12}(x)}}\right)\,;\\
    &\E[\ell(\hat f(x),2)] = \Pp_{\zeta\sim\mathcal N(0,1)}\left(\zeta>\frac{M_2(x)-M_1(x)}{\sqrt{Q_{11}(x) + Q_{22}(x) - 2Q_{12}(x)}}\right)\,.
\end{align*}
Clearly, the two expressions above can be written explicitly in terms of the error function erf, as $\zeta$ is distributed as a standard normal and $\Pp(\zeta>u) = \frac{1}{2}(1-\text{erf}\,(u/\sqrt 2))$. It is then straightforward to see that $\E[\ell(\hat f(x), y)]$ is differentiable with respect to $M$ and $Q$, with non-zero derivatives.

When there are more than two classes, things become more complicated. It is however possible to exploit the Gaussianity and obtain a MC estimator of the expected loss, whose gradient with respect to $M$ and $Q$ is computable and not trivially zero. We refer to Appendix \ref{app:multiclass} for  details.
                                        
\subsection{Derivatives of \texorpdfstring{$M$}{M} and \texorpdfstring{$Q$}{Q}}
We have so far established that we can effectively differentiate the expected loss with respect to $M$ and $Q$. Still, to train the network we will need to evaluate the gradients with respect to the hyper-parameters $\emme$ and $\esse$. Now, recall that $\Phi^0_k(x)$ is in the form $\phi(a \zeta + b)$, with $a = \sqrt {Q^0_{kk}(x)}$, $b= M^0_k(x)$, and $\zeta\sim\mathcal N(0,1)$. When the activation function $\phi$ is simple enough, $\E[\phi(a \zeta+b)]$ and $\E[\phi(a \zeta+b)^2]$ have closed-form expressions. Exploiting this fact, it is possible to evaluate the $\emme^0$- and  $\esse^0$-derivatives of $M$ and $Q$, needed in order to train the network with gradient-based methods. This if for instance the case for $\phi=\ReLU$ and $\phi=\sin$ (see Appendix \ref{app:relusin}).

\subsection{Final computation of the bound}
Once completed the training, we need to abandon the Gaussian approximation to compute the final bound. We will follow the same approach as \cite{dziugaite2017computing} and \cite{perezortiz2021tighter}.

Let $W_1,\dots, W_N$ be $N$ independent realisations of the whole set of network stochastic parameters, drawn according to $\Q$. For $\delta'\in(0,1)$, with probability at least $1-\delta'$ \citep{langfordcaruana}
\begin{equation}\label{eq:emp_err_est}
L_S(\Q) \leq \kl^{-1}\left(\hat L_S(\Q)\big|\tfrac{1}{N}\log\tfrac{2}{\delta'}\right)\,,
\end{equation}
where $\kl^{-1}$ is defined in Proposition \ref{prop:bound} and we have defined $\hat L_S(\Q) = \frac{1}{N}\sum_{h=1}^N L_S(W_h)$. Since $\kl^{-1}$ is increasing in its first argument
, Proposition \ref{prop:bound} yields that with probability at least $1-\delta-\delta'$
\begin{equation}\label{eq:final_bound}
L_X(\Q) \leq \kl^{-1}\left(\kl^{-1}\left(\hat L_S(\Q)\big|\tfrac{1}{N}\log\tfrac{2}{\delta'}\right)\bigg|\frac{\KL(\Q\|\PR)+\log\frac{2\sqrt m}{\delta}}{m}\right)\,.
\end{equation}
This method is often computationally very expensive, especially for large values of $N$. However, using a standard re-parameterisation trick from \cite{kingma15} helps to speed-up the evaluation, as it makes possible to obtain a realisation of the network by sampling only $d+n$ standard normals, instead of all the $p\times n^2\times q$ stochastic parameters.

As a final remark, an alternative way to get an exact result from the Gaussian approximation is to use an upper bound, such as the one in Corollary \ref{cor:loss_bound}, to control the finite-size correction to the expected empirical loss. However, for networks with $O(10^3)$ hidden nodes, like those that we used in our experiments, this last approach gives looser bounds compared to the method described above, at least when the number $N$ of samples used for the MC estimate $\hat L_S(\Q)$ is of order $O(10^5)$.
\section{Experimental results}
In this section, we present some empirical results to validate our theoretical findings. First, we compare the Gaussian predictions with the distribution of the output nodes of a wide stochastic network. Then, we report the results obtained by training a stochastic network on MNIST, and on a binary version of it, with our Gaussian method and with standard PAC-Bayesian procedures like those from \cite{dziugaite2017computing} and \cite{perezortiz2021tighter}. On both datasets, the Gaussian method led to tighter final generalisation bounds. The PyTorch code developped for this paper is available at \url{https://github.com/eclerico/WideStochNet}. For the sake of conciseness, we refer to Appendix \ref{app:exp} for an exhaustive account of the experimental details.

In order to keep the experimental setting as simple as possible, we opted for training only the means $\emme$ (keeping the standard deviations $\esse$ fixed at their initial value), similarly to what was done in \cite{letarte2020dichotomize}. Moreover, coherently with the rest of this paper, all the networks that we used
\begin{wrapfigure}{l}{0.6\linewidth}
\includegraphics[width=0.6\textwidth]{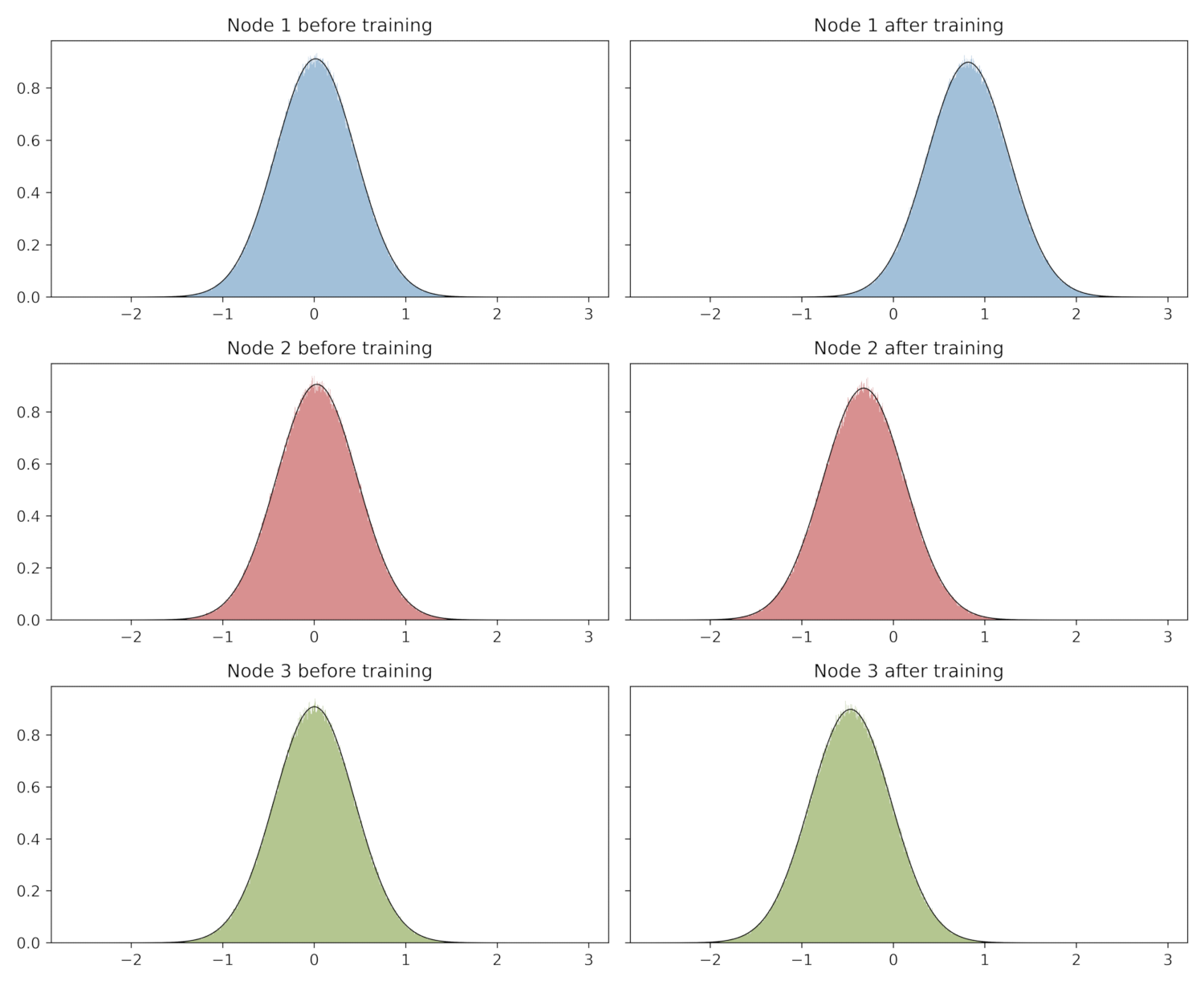}
\caption{Distributions of the three output nodes of a wide stochastic network trained on a toy classification task. In black the theoretical predictions.}
\label{fig:toycomp}
\end{wrapfigure} 
had no bias. The PAC-Bayesian priors were chosen in a completely data-independent fashion, and coincided with the distribution of the network at initialisation, as suggested by \cite{dziugaite2017computing}.

We start by considering a toy dataset, whose datapoints were sampled from three multivariate standard normal distributions (labelled as $1$, $2$, $3$) in $\mathbb R^4$, and then projected on the unit sphere in $\mathbb R^4$. A stochastic network with one hidden layer of $n=1200$ nodes was trained to predict from which of the three Gaussian clusters each point comes. The histograms in Figure \ref{fig:toycomp} represent the distributions of the network's output nodes, both before and after the training. They have been obtained for a single example by sampling $10^6$ realisations of the random parameters. The theoretical predictions of the Gaussian profiles are plotted in black. The agreement with the histograms is striking, showing that the network is essentially Gaussian already for $O(10^3)$ hidden nodes.

We now focus on the experiments on a binary version of the MNIST dataset, where the training dataset consisted of $m=60000$ images. We considered a stochastic network with $n=1200$ hidden nodes and ReLU activation function, initialised as in \eqref{eq:init}. We tried four training methods, based on different training objectives. The three ``standard'' PAC-Bayesian procedures used the objectives
\begin{align}
\begin{split}\label{eq:objs}
    &\mathtt{McAll} = \bar L_S(\Q) + \sqrt{\frac{\KL(\Q\|\PR) + \log\frac{2\sqrt m}{\delta}}{2m}}\,;\\
    &\mathtt{lbd} = \frac{\bar L_S(\Q)}{(1-\lambda/2)} + \frac{\KL(\Q\|\PR) + \log\frac{2\sqrt m}{\delta}}{m\lambda(1-\lambda/2)}\,;\\
    &\mathtt{quad} = \left(\sqrt{\bar L_S(\Q) + \frac{\KL(\Q\|\PR) + \log\frac{2\sqrt m}{\delta}}{2m}}+ \sqrt{\frac{\KL(\Q\|\PR) + \log\frac{2\sqrt m}{\delta}}{2m}}\right)^2\,,
\end{split}
\end{align}
where $\bar L_S(\Q)$ is the expectation under $\Q$ of the empirical cross-entropy loss divided by $\log 2$. The objective $\mathtt{McAll}$ is from \cite{dziugaite2017computing}, while $\mathtt{quad}$ comes from \cite{perezortiz2021tighter} and $\mathtt{lbd}$ was originally derived by \cite{pmlr-v76-thiemann17a} and later used by \cite{perezortiz2021tighter}. In $\mathtt{lbd}$, $\lambda\in(0, 1)$ is also a trainable parameter.  

As we are dealing with binary classification, for the ``Gaussian'' method (described Section \ref{sec:PACGauss}), the expected value $L_S(\Q)$ of the 01-loss can be evaluated directly (see Section \ref{sec:01training}). We could hence directly optimise \eqref{eq:bound}, using the objective
\begin{equation}\label{eq:objbound}
\mathtt{invkl} = \kl^{-1}\left(L_S(\Q)\bigg|\frac{\KL(\Q\|\PR)+\log\frac{2\sqrt m}{\delta}}{m}\right)\,.
\end{equation}

Table \ref{tab:binmnist} illustrates the results of the experiment. The column ``Bound'' reports the values of the PAC-Bayesian bound \eqref{eq:final_bound}. For the upper bound \eqref{eq:emp_err_est} on the empirical error, we used $N=150000$ independent realisations of the net, $\delta'=0.01$, and $\delta=0.025$, so that the final generalisation bounds
\begin{wraptable}{r}{.6\textwidth}
\begin{minipage}{\linewidth}
\centering
\caption{Binary MNIST}
\label{tab:binmnist}
\resizebox{\textwidth}{!}{\begin{tabular}{l|cc|ccc}
    \toprule
    Method      & Bound  & Test error        & G Bound         & G Loss        & Penalty        \\
    \midrule
    $\mathtt{invkl}$& \textbf{.1773}& .0694\textsubscript{\textpm.0040} &.1741  & .0676& .0492          \\
    \midrule
    $\mathtt{McAll}$& .1978         & \textbf{.0456}\textsubscript{\textpm.0025}& .1947           & .0428         & .1006 \\
    $\mathtt{lbd}$& .1856         & .0543\textsubscript{\textpm.0030}& .1825           & .0520         & .0752          \\
    $\mathtt{quad}$& .1855         & .0533\textsubscript{\textpm.0030}& .1823           & .0515         & .0757          \\
    \bottomrule
\end{tabular}}
\bigskip
\bigskip
\end{minipage}\\
\begin{minipage}{\linewidth}
\centering
\caption{MNIST}
\label{tab:mnist}
\resizebox{\textwidth}{!}{\begin{tabular}{l|cc|ccc}
    \toprule
    Method      & Bound  & Test Error        & G Bound         & G Loss        & Penalty        \\
    \midrule
    $\mathtt{invkl}$& \textbf{.2807}& \textbf{.1083}\textsubscript{\textpm.0039} &.2773  & .1114& .0821          \\
    \midrule
    $\mathtt{McAll}$& .4158         & .3189\textsubscript{\textpm.0097}& .4120           & .3265         & .0155 \\
    $\mathtt{lbd}$& .3736         & .2639\textsubscript{\textpm.0085}& .3699           & .2717         & .0216          \\
    $\mathtt{quad}$& .3735         & .2637\textsubscript{\textpm.0083}& .3698           & .2716         & .0217          \\
    \bottomrule
\end{tabular}}
\end{minipage}
\end{wraptable} 
hold with probability higher than $0.965$ on the random selection of the training set. The colum ``Test Error'' reports the average test error on a held-out dataset and its standard deviation. These values were evaluated on 10000 independent realisations of the test error. The two next columns refer to quantities computed within the Gaussian approximation: ``G Bound'' is the bound given by \eqref{eq:bound} and ``G Loss'' is the expected 01-loss. ``Penalty'' is the quantity $(\KL(\Q\|\PR)+\log\frac{2\sqrt m}{\delta})/m$. 

The ``Gaussian'' method yielded a tighter final bound than the ``standard'' ones. Yet, the best test error is achieved by $\mathtt{McAll}$. It is worth noting that the final bound obtained with $\mathtt{McAll}$ is slightly worse than the one from \cite{dziugaite2017computing}, where for a similar network of $1200$ hidden nodes a final bound of $.179$ was obtained, whilst our result is $.1921$. However, our setting is simpler: our network has no bias, the standard deviations are not trained, and there is no choice of the optimal prior among different initialisations.

Finally, we report the results of a similar experiment on the full MNIST dataset (with the original $10$ labels). The network is essentially the same one used for binary MNIST, with $1200$ hidden nodes and ReLU activation function. The main difference is that now we have $10$ output nodes. For the ``standard'' methods, we trained on the same objectives \eqref{eq:objs} as before, although this time we used a bounded version of the cross-entropy loss, as in \cite{perezortiz2021tighter}. $\bar L_S(\Q)$ is the expected value under $\Q$ of this bounded cross-entropy, averaged on the training set. The ``Gaussian'' method used the objective \eqref{eq:objbound}, where $L_S(\Q)$ is again the expected empirical 01-loss. Actually, as we were dealing with more than two classes, 
we could not exactly compute the expected 01-loss, since we do not have a simple closed-form expression for it, and we proceeded as described in Appendix \ref{app:multiclass}.

Table \ref{tab:mnist} reports the results of the experiment on the full MNIST dataset, where for the estimate of the final bounds we again used $N=150000$, $\delta'=0.01$, and $\delta=0.025$. Once more, the ``Gaussian'' method obtained a tighter result, with almost a $0.1$-gap with the bounds achieved by the other procedures. This time, the ``Gaussian'' method also attained the tightest test error. It is worth noticing that the PAC-Bayesian penalties of the standard methods are much lower than the respective losses\footnote{Training with longer time did not bring any relevant improvement, as the GD descent appeared to have already stabilised.}, something that did not occur in Table \ref{tab:binmnist}. We conjecture that this behaviour is due to the different rescaling of the cross-entropy loss. On the other hand, this is not the case for the Gaussian method, as the loss does not require any rescaling.
\section{Conclusions and perspectives}
In the present work, we derive a Gaussian limit for a simple one-layer stochastic architecture, and point out how this result can be used in practice for the PAC-Bayesian training of wide shallow networks. First, we rigorously prove the validity of the limit at the initialisation and in a lazy training regime. Then, we show empirically that the proposed training method can outperform some standard PAC-Bayesian training procedures. 

A main limitation of our approach is that it is limited to  shallow  networks with a single hidden layer. Indeed, our approach to establish the Gaussian limit relies on the fact that the hidden nodes are independent. This is not true anymore for any subsequent layer, and hence the CLT result that we use is no longer applicable. It is however worth mentioning that all the covariance matrices of the hidden layers are almost diagonal at the initialisation (as it is easy to check that the non-diagonal elements  scale as $1/\sqrt n$) and a lazy-training constraint equivalent to the one in Proposition \ref{prop:lazytrain} might be enough to help establishing a rigorous Gaussian limit holding for multilayer architectures. In any case, even if one were able to use a limit theorem holding for the sum of weakly dependent nodes, evaluating the output's law of the network would require the knowledge of the (non-diagonal) covariance matrices of the hidden layers.\footnote{Although the non diagonal elements are expected to scale as $1/\sqrt n$, the fact that they usually appear in sums of $O(n)$ terms can make their contribution non negligible. This was confirmed by a few empirical tests were we tried to only consider the diagonal elements of the covariance matrices of the central layers, and obtained inconsistencies  between the predicted and the empirical output laws.} As we are looking at wide networks, the storage of these matrices would require a considerable amount of computational memory. Nevertheless, it is still possible to exploit our Gaussian PAC-Bayesian training ideas for multilayer architectures. This was recently done by \cite{clerico22aistats}, which built on our work to obtain PAC-Bayesian bounds using the fact that the network's output is Gaussian when conditioned on the hidden layers. 

As a final remark, in the present work we did not treat the case of a network with biases. This is likely to be an elementary extension, which should not require much additional work. 
\acks{EC is partially supported by the UK Engineering and Physical Sciences Research Council (EPSRC) through the grant EP/R513295/1 (DTP scheme) and AD by EPSRC CoSInES EP/R034710/1.  AD acknowledges support of the UK Defence Science and Technology Laboratory (DSTL) and EPSRC grant EP/R013616/1. This is part of the collaboration between US DOD, UK MOD and UK EPSRC under the Multidisciplinary University Research Initiative. The authors would like to thank Jian Qian for the valuable comments and suggestions.}
\newpage
\cleardoublepage
\phantomsection
\addcontentsline{toc}{section}{References}
\bibliography{bib}
\newpage
\appendix
\section{Omitted proofs}\label{app:proofs}
Throughout this section we use several notations for the norms of vectors and matrices. For $\gamma\geq 1$ and a vector $v$, $\|v\|_\gamma = (\sum_i |v_i|^\gamma)^{1/\gamma}$. If $A$ is a matrix, we define $\|A\|_{F,\gamma} = (\sum_{ij}|A_{ij}|^\gamma)^{1/\gamma}$ and $\|A\|_\gamma=\sup_{v:\|v\|_\gamma = 1}\|A v\|_\gamma$. We also recall that $\Pp$ denotes the intrinsic stochaticity of the network, while $\hat\Pp$ is the randomness due to the initialisation. These two sources of stochasticity are always supposed to be mutually independent. We denote as $\E$ the expectation wrt $\Pp$, and as $\hat\E$ the one wrt $\hat\Pp$. Moreover we write $\Gamma = O_{\hat\Pp}(n^\gamma)$ to mean that $\limsup_{n\to\infty}\frac{|\Gamma|}{n^\gamma}<\infty$ in probability wrt $\hat\Pp$, and $\Gamma = \Omega_{\hat\Pp}(n^\gamma)$ for $\limsup_{n\to\infty}\frac{|\Gamma|}{n^\gamma}>0$ in probability wrt $\hat\Pp$.

We want to prove a rigorous result of convergence to the Gaussian limit of wide stochastic networks. We will essentially make use of the next result, due to \cite{bentkus:lyap_bound}.
\begin{theorem}\label{thm:bentkus}
Let $X_1,\dots, X_n$ be independent random vectors in $\R^q$, such that $\E[X_j]=0$ for all $j$. Let $Y = \frac{1}{\sqrt n}\sum_{j=1}^nX_j$ and assume that the covariance matrix $\C[Y]$ is non singular. Let $Z\sim\mathcal N(0, \C[Y])$. Denote as $\frac{1}{\sqrt\C[Y]}$ the inverse of the positive square root of the matrix $\C[Y]$, and let $B_j = \E[\|\frac{1}{\sqrt\C[Y]}\,X_j\|_2^3]$ and $B = \frac{1}{n}\sum_{j=1}^n B_j$. Let $\mathcal C$ denote the class of all convex subsets of $\R^p$. Then, there exists an absolute positive constant $\kappa<4$ such that
\begin{equation}\label{eq:bentkus}
    \sup_{C\in\mathcal C}|\Pp(Y\in C) - \Pp(Z\in C)|\leq \kappa q^{1/4}\frac{B}{\sqrt n}\,. 
\end{equation}
\end{theorem}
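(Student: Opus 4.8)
This is a known result: Theorem~\ref{thm:bentkus} is precisely the convex-set Berry--Esseen bound of Bentkus, and within this paper its only role is to be invoked, so the honest ``proof'' is a pointer to \cite{bentkus:lyap_bound}. If one wanted to reproduce the argument, here is the plan.

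\textbf{Reduction to identity covariance.} I would first standardise. Replacing each $X_j$ by $\C[Y]^{-1/2}X_j$ leaves every $B_j$, and hence $B$, unchanged; it turns $Y$ into $\C[Y]^{-1/2}Y$, which has covariance $\Id$, and $Z$ into a standard Gaussian; and since $x\mapsto\C[Y]^{-1/2}x$ is a linear bijection of $\R^q$ it maps $\Cc$ onto itself, so the supremum on the left of \eqref{eq:bentkus} is also unchanged. Hence we may assume $\sum_j\C[X_j]=n\,\Id$ and $Z\sim\mathcal N(0,\Id)$, and the target becomes $\sup_{C\in\Cc}|\Pp(Y\in C)-\gamma_q(C)|\le\kappa q^{1/4}B/\sqrt n$, with $\gamma_q$ the standard Gaussian on $\R^q$.

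\textbf{The three ingredients.} The bound is then assembled from: (i) \emph{smoothing} --- sandwich $\mathbf 1_C$ between mollified indicators $h^-_\epsilon\le\mathbf 1_C\le h^+_\epsilon$ of the inner and outer $\epsilon$-parallel bodies of $C$; convexity of $C$ makes these parallel bodies convex and nested, and the mollified indicators satisfy derivative bounds of order $\epsilon^{-k}$ at order $k$; (ii) \emph{Gaussian anticoncentration for convex sets} --- the Gaussian perimeter of a convex body in $\R^q$ is $O(q^{1/4})$ (the Ball--Nazarov bound on the Gaussian Minkowski content of a convex boundary), so the Gaussian mass of an $\epsilon$-collar of $\partial C$ is $\lesssim q^{1/4}\epsilon$; this controls $|\E h^+_\epsilon(Z)-\E h^-_\epsilon(Z)|$ and is the \emph{only} place the dimension enters; (iii) a \emph{Lindeberg/Stein swapping bound} for the smooth test function --- replace the summands $X_j$ one at a time by Gaussians $G_j$ with $\C[G_j]=\C[X_j]$ and Taylor-expand to third order; the zeroth- through second-order terms cancel because the first two moments match, and the third-order remainder is of order $\epsilon^{-3}\,n^{-3/2}\sum_j(\E\|X_j\|_2^3+\E\|G_j\|_2^3)\lesssim\epsilon^{-3}B/\sqrt n$, using $\E\|G_j\|_2^3\lesssim(\E\|X_j\|_2^2)^{3/2}\le\E\|X_j\|_2^3=B_j$.

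\textbf{The delicate part.} Combining (ii) and (iii) gives a total error of order $q^{1/4}\epsilon+\epsilon^{-3}B/\sqrt n$, and naively optimising $\epsilon$ yields only the \emph{sublinear} rate $(B/\sqrt n)^{1/4}$ --- not the linear-in-$(B/\sqrt n)$ bound asserted. Fixing this is the real content of Bentkus's theorem and the step I expect to be the main obstacle: one needs a self-improvement argument. One bounds the left-hand side of \eqref{eq:bentkus} in terms of the same functional for an auxiliary problem with a \emph{strictly smaller} Lyapunov ratio, plus a clean Lyapunov term, and then solves the resulting recursion; equivalently, one works with the solution $u_C$ of the Gaussian Stein equation $\Delta u_C(x)-x\cdot\nabla u_C(x)=\mathbf 1_C(x)-\gamma_q(C)$ and exploits that, for convex $C$, this solution admits a \emph{second}-order (rather than merely third-order) derivative estimate governed by the Gaussian surface area of $C$, so that the swapping step costs only $B/\sqrt n$ while the geometry contributes exactly the factor $q^{1/4}$. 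Once this is in place, the remaining work --- tracking the absolute constant to obtain $\kappa<4$, and passing from arbitrary convex sets to ones with smooth boundary by approximation --- is routine.
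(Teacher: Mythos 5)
The paper offers no proof of this theorem: it is imported verbatim from \cite{bentkus:lyap_bound}, so your pointer to that reference is exactly the paper's own treatment. Your additional sketch of Bentkus's argument (standardisation, smoothing plus the Ball--Nazarov $O(q^{1/4})$ Gaussian-perimeter bound, Lindeberg swapping, and the self-improvement step needed to upgrade the naive $(B/\sqrt n)^{1/4}$ rate to $B/\sqrt n$) is an accurate account of where the difficulty lies, but it is supplementary to what the paper requires.
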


Our goal is to prove a Gaussian limit as $n\to\infty$ for $F(x)$, whose components are given by 
$$F_i(x) = \frac{1}{\sqrt n}\sum_{j=1}^n \esse_{ij}^1\zeta_{ij}^1\Phi^0_j(x) +\frac{1}{\sqrt n}\sum_{j=1}^n \emme_{ij}^1 \Phi_j^0(x)\,.$$

Let us denote by $X_j$ the $q$-dimensional vector $X_j = (X_{1j}\dots X_{qj})$, with 
$$X_{ij} = \esse_{ij}^1 \zeta_{ij}^1\Phi^0_j(x) + \emme_{ij}^1(\Phi_j^0(x) - \E[\Phi_j^0(x)])\,.$$
Since all the $\zeta^1_{ij}$'s and the $\Phi^0_j$'s are independent, the $X_j$'s constitute a family of $n$ centred independent $q$-dimensional random vectors (wrt the intrinsic network stochasticity $\Pp$).

Clearly, we have $F(x) = \E[F(x)] + \frac{1}{\sqrt n}\sum_{j=1}^n X_j$. Let us define $Y = \frac{1}{\sqrt n}\sum_{j=1}^n X_j$. Note that, for all $x$, the covariance matrix $\C[Y]$ is given by $Q(x)$ (defined in \eqref{eq:Q}), no matter if $Y$ is Gaussian or not. Using the same notations of Theorem \ref{thm:bentkus}, assuming that $\C[Y]$ is non-singular, we have
$$\sup_{C\in\mathcal C}|\Pp(Y\in C) - \Pp(Z\in C)|\leq \kappa q^{1/4}\frac{B}{\sqrt n}\,.$$
To prove that the $Y$ behaves as a Gaussian for large $n$, we will show that $B= O(1)$ for large $n$. We can easily upperbound each $B_j$ as 
$$B_j\leq \frac{1}{\lambda[Y]^{3/2}}\E[\|X_j\|_2^3]\,,$$
where $\lambda[Y]>0$ is the smallest eigenvalue of $\C[Y]$. For simplicity, we have omitted the dependence of these quantities on $\emme$ and $\esse$. We will often do so throughout this section, in order to lighten the notation. 

Define $G = \frac{1}{n}\sum_{j=1}^n \E[\|X_j\|_2^3]$ and $\Lambda = \frac{1}{n}\sum_{j=1}^n\lambda[X_j]$. Clearly, as the $X_j$'s are independent, we have $\C[Y] = \frac{1}{n}\sum_{j=1}^n\C[X_j]$. In particular, we can easily find that $\lambda[Y]\geq \Lambda$.

We summarise what we have so far in the next lemma.
\begin{lemma}\label{lemma:Bbound}
With the notations introduced above, assuming that $\C[Y]$ is not singular, we have
$$B\leq\frac{1}{n}\sum_{j=1}^n \frac{1}{\Lambda^{3/2}}\E[\|X_j\|_2^3] = \frac{G}{\Lambda^{3/2}}\,.$$
\end{lemma}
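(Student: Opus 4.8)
The statement merely repackages the chain of inequalities assembled in the discussion preceding it, so the plan is to make each link precise. First I would invoke Theorem~\ref{thm:bentkus} with $X_1,\dots,X_n$ the centred $q$-dimensional vectors defined above: since $\C[Y]=Q(x)$ is assumed non-singular, its symmetric positive-definite inverse square root $\C[Y]^{-1/2}$ is well defined, and the theorem gives $B=\tfrac1n\sum_{j=1}^n B_j$ with $B_j=\E[\|\C[Y]^{-1/2}X_j\|_2^3]$.

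The one genuinely quantitative step is a deterministic norm bound. For any fixed realization of $X_j$ we have $\|\C[Y]^{-1/2}X_j\|_2\le\|\C[Y]^{-1/2}\|_2\,\|X_j\|_2$, and the $2\!\to\!2$ operator norm of the symmetric positive-definite matrix $\C[Y]^{-1/2}$ is the largest of the numbers $\lambda_k^{-1/2}$, where the $\lambda_k>0$ are the eigenvalues of $\C[Y]$; hence $\|\C[Y]^{-1/2}\|_2=\lambda[Y]^{-1/2}$, with $\lambda[Y]$ the smallest eigenvalue of $\C[Y]$. Cubing and taking expectations yields $B_j\le\lambda[Y]^{-3/2}\,\E[\|X_j\|_2^3]$, and therefore $B\le\lambda[Y]^{-3/2}G$.

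It then remains to lower bound $\lambda[Y]$ by the handier quantity $\Lambda$. Because the $X_j$ are independent, $\C[Y]=\tfrac1n\sum_{j=1}^n\C[X_j]$ is an average of positive-semidefinite matrices, and the Rayleigh-quotient characterization of the smallest eigenvalue gives $\lambda[Y]=\min_{\|v\|_2=1}\tfrac1n\sum_j v^\top\C[X_j]v\ge\tfrac1n\sum_j\min_{\|v\|_2=1}v^\top\C[X_j]v=\Lambda$. Hence $\lambda[Y]^{-3/2}\le\Lambda^{-3/2}$ (the inequality being vacuously true should $\Lambda=0$), and combining this with the previous paragraph gives $B\le G/\Lambda^{3/2}=\tfrac1n\sum_{j=1}^n\Lambda^{-3/2}\E[\|X_j\|_2^3]$, which is exactly the claim.

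There is essentially no obstacle here: the only two facts one should handle with a little care are that the spectral norm of the inverse positive square root of a symmetric positive-definite matrix equals $\lambda_{\min}^{-1/2}$, and the super-additivity $\lambda_{\min}(A+B)\ge\lambda_{\min}(A)+\lambda_{\min}(B)$ for symmetric $A,B$ — both elementary consequences of the variational formula for $\lambda_{\min}$. The substantive work, namely showing that $G$ and $\Lambda$ are of the right order so that $B=O_{\hat\Pp}(1)$ under the initialization, is deferred to Propositions~\ref{prop:init} and~\ref{prop:lazytrain}.
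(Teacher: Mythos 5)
Your proof is correct and follows essentially the same route as the paper: bound each $B_j$ by $\lambda[Y]^{-3/2}\E[\|X_j\|_2^3]$ via the operator norm of $\C[Y]^{-1/2}$, then lower bound $\lambda[Y]$ by $\Lambda$ using the super-additivity of the smallest eigenvalue over the sum $\C[Y]=\tfrac1n\sum_j\C[X_j]$. The paper states these two steps without detail ("we can easily upperbound\dots", "we can easily find that $\lambda[Y]\geq\Lambda$"), and your write-up simply supplies the standard variational arguments behind them.
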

Now, from  H\"older's inequality we have $\|X_j\|_2\leq\|\mathbf 1_q\|_6\|X_j\|_3= q^{1/6}\|X_j\|_3$, and so
\begin{equation}\label{eq:norm_bound}
\|X_j\|_2^3 \leq q^{1/2}\|X_j\|_3^3 = q^{1/2} \sum_{i=1}^q |X_{ij}|^3\,.
\end{equation}
Then, with some simple algebraic manipulation, and applying Jensen's inequality, we obtain 
$$\E[|X_{ij}|^3]\leq (|\esse_{ij}^1|^3\E[|\zeta_{ij}^1|^3]+8|\emme_{ij}^1|^3)\E[|\Phi_j^0(x)|^3]\,.$$
For convenience, we introduce the following notations
$$H_{ij} = (2|\esse_{ij}^1|^3+8|\emme_{ij}^1|^3)\E[|\Phi_j^0(x)|^3]\,;\qquad H_j =  \sum_{i=1}^q H_{ij}\,;\qquad H = \frac{1}{n}\sum_{j=1}^n H_j\,,$$
so that we have $G\leq q^{1/2}H$, since $\E[|\zeta|^3]=2\sqrt{2/\pi}<2$ for $\zeta\sim\mathcal N(0,1)$.

On the other hand, we can find a lowerbound for $\Lambda$ as well. Indeed, first we can notice that 
$$\C_{ii'}[X_j] = \delta_{ii'}(\esse^1_{ij})^2\E[\Phi^0_j(x)^2] + \emme^1_{ij}\emme^1_{i'j}\V[\Phi^0_j(x)]\,.$$
The first term is a diagonal matrix, while the second one is non-negative definite. Hence we can write
$$\lambda[X_j] \geq \E[\Phi^0_j(x)^2]\min_{i=1\dots q}(\esse^1_{ij})^2\,.$$

Defining 
\begin{equation}\label{eq:def_Theta}
\Theta = \frac{1}{n}\sum_{j=1}^n \E[\Phi^0_j(x)^2]\min_{i=1\dots q}(\esse^1_{ij})^2\leq\Lambda
\end{equation}
we have the following corollary of Lemma \ref{lemma:Bbound}.
\begin{corollary}\label{cor:app_benktus}
With the same notations as above, if $\C[Y]$ is non singular, we have
$$B \leq q^{1/2}\frac{H}{\Theta^{3/2}}\,.$$
\end{corollary}

Note that both $H$ and $\Theta$ can be evaluated explicitly, given the parameters of the networks, as long as $\phi$ allows for an explicit evaluation of $\E[|\Phi^0(x)|^\gamma]$, for $\gamma=1, 2, 3$. This means that we can give an exact upper bound to the finite-size error of the predicted 01-loss, for any configuration of the network.

We can now prove Proposition \ref{prop:bentkus} and Corollary \ref{cor:loss_bound} from the main text.

\begin{manualprop}{\ref{prop:bentkus}}
For any fixed input $x$ and width $n$, define $M(x)$ and $Q(x)$ as in \eqref{eq:M} and \eqref{eq:Q}. Let $Z(x)\sim\mathcal N(M(x), Q(x))$ and denote as $\Cc$ the class of measurable convex subsets of $\R^q$. Let $F$ be defined as in \eqref{eq:net}. Then
$$\sup_{C\in\Cc}|\Pp(F(x)\in C) - \Pp(Z(x)\in C)| \leq \kappa q^{1/4} \frac{B(\emme, \esse)}{\sqrt n}\,,$$
where $\kappa< 4$ is an absolute constant and
$$B(\emme,\esse)\leq q^{1/2}\frac{\frac{1}{n}\sum_{j=1}^n\sum_{i=1}^q (2|\esse^1_{ij}|^3 + 8|\emme^1_{ij}|^3)\E[|\Phi^0_j(x)|^3]}{\left(\frac{1}{n}\sum_{j=1}^n\E[\Phi_j^0(x)^2]\min_{i=1\dots q}(\esse_{ij}^1)^2\right)^{3/2}}\,.$$
In particular, if $B(\emme,\esse)= o(\sqrt n)$ for $n\to\infty$, then $F(x)- Z(x)\to 0$, in distribution.
\end{manualprop}
\begin{proof}
The result is a straight consequence of Theorem \ref{thm:bentkus} and Corollary \ref{cor:app_benktus}. Note that $\C[Y]$ is non singular as long as all the components of $\esse$ are non-zero, so as long as the bound in the statement is finite. 
\end{proof}

In the next section we show how, with a suitable random initialisation, we can assure that the network has an almost Gaussian behaviour. Successively, we will show that this behaviour is preserved during training, as long as the hyper-parameters do not move too much from their initial values. 
\subsection{Initialisation}
We consider the random initialisation:
\begin{equation*}\tag{\ref{eq:init}}
\begin{aligned}
    &\emme_{jk}^0\sim\mathcal N(0, 1)\,;\qquad&&\emme_{ij}^1\sim\mathcal N(0, 1)\,;\\
    &\esse_{jk}^0=1\,;\qquad&&\esse_{ij}^1=1\,,
\end{aligned}
\end{equation*}
As now we have two sources of randomness (the initialisation and the intrinsic stochasticity of the network) to avoid confusion we will denote as $\hat \E$, $\hat \Pp$, $\hat \V$ the expectations, probabilities and variances with respect to the initialisation, whilst $\E$, $\Pp$ and $\V$ refer to the network intrinsic stochasticity. 
\begin{lemma}\label{lemma:h_th}
Define $H$ and $\Theta$ as in the previous section for a network with parameters $(\emme,\esse)$ distributed according to $\hat\Pp$, as in \eqref{eq:init}. Assume that $\phi$ is Lipshitz continuous.
Then, for any fixed $x\neq 0$, $H\to h>0$ and $\Theta\to \theta>0$ in probability as $n\to\infty$, with respect to the random initialisation, where both $h$ and $\theta$ are finite.
\end{lemma}
\begin{proof}
First notice that, fixed an input $x\neq 0$ and fixed $n$, all the $\Phi^0_j$'s are iid, with respect to $\hat\Pp$, as all the components of $\emme^0$ and the $\esse_0$ are. As a consequence all the $H_j$'s are iid with respect to $\hat\Pp$ (note that they have different distribution for different $n$ as the law of the $\Phi_j^0$'s depends on $n$). Now, thanks to the fact that $\phi$ is Lipshitz continuous, we have that $\limsup_{n\to\infty} \hat \V[H_j]<\infty$. Hence, by a standard application of the CLT for triangular arrays, we get that
$$H-\hat\E[H] = \frac{1}{n}\sum_{j=1}^n(H_j-\hat \E[H_j])\to 0$$
in distribution, and hence in probability, as $0$ is a constant. It is quickly verified that the limit $h = \lim_{n\to\infty}\hat\E[H]$ exists, finite and positive. The proof for $\Theta$ is analogous.
\end{proof}

Now we can easily prove Proposition \ref{prop:init}.
\begin{manualprop}{\ref{prop:init}}
Consider a sequence of networks of increasing width initialised according to \eqref{eq:init}, and whose activation function $\phi$ is Lipshitz continuous. For any fixed input $x\neq 0$, defining $B$ as in Proposition \ref{prop:bentkus}, we have $\frac{B(\emme,\esse)}{\sqrt n}\to 0$, as $n\to\infty$, in probability with respect to the random initialisation $\hat\Pp$. More precisely, $B(\emme,\esse) = O(1)$ wrt $\hat\Pp$, as $n\to\infty$. In particular, at the initialisation the network tends to a Gaussian limit, in distribution wrt the intrinsic stochasticity $\Pp$ and in probability wrt $\hat\Pp$.
\end{manualprop}
\begin{proof}
It is a straight consequence of Lemma \ref{lemma:h_th}.
\end{proof}
\subsection{Lazy training}
We have established that the Gaussian limit holds at initialisation. In the present section we will see that, as far as the hyper-parameters of the network do not move too much from their initial values, the limit keeps its validity.
\begin{manualprop}{\ref{prop:lazytrain}}
Fix a constant $J>0$ independent of $n$, and assume that $\phi$ is Lipshitz. For a network of width $n$, with initial configuration $(\widetilde\emme,\widetilde\esse)$ drawn according to $\hat\Pp$ as in \eqref{eq:init}, denote as $\Bb_J$ the ball
$$
\Bb_J = \left\{(\emme,\esse)\;:\quad\|\emme^0-\widetilde\emme^0\|^2_{F,2} + \|\emme^1-\widetilde\emme^1\|_{F,2}^2+\|\esse^0-\widetilde\esse^0\|^2_{F,2} + \|\esse^1-\widetilde\esse^1\|_{F,2}^2\leq J^2\right\}\,,
$$
where $\|\cdot\|_{F,2}$ denotes the 2-Frobenius norm of a matrix.
Let $B$ be defined as in Proposition \ref{prop:bentkus}. For any fixed input $x\neq 0$ we have $B(\emme, \esse)= O(1)$ as $n\to\infty$, uniformly on $\Bb_J$, in probability with respect to the random initialisation $\hat\Pp$.
\end{manualprop}
\begin{proof}
For convenience we will write with a tilde all the quantities relative to the network at initialisation. We denote with a $\Delta$ the difference between the final and the initial values of these quantities. For instance, $\Theta = \frac{1}{n}\sum_{j=1}^n \E[\Phi^0_j(x)^2]\min_{i=1\dots q}(\esse^1_{ij})^2$, $\widetilde\Theta = \frac{1}{n}\sum_{j=1}^n \E[\widetilde\Phi^0_j(x)^2]\min_{i=1\dots q}(\widetilde\esse^1_{ij})^2$, and $\Delta \Theta = \Theta - \widetilde\Theta$.

We will show that for $n\to\infty$, $\Theta=  \Omega_{\hat\Pp}(1)$ and $G= O_{\hat\Pp}(1)$ uniformly on $\Bb_J$, so that we can conclude using that $\Lambda\geq\Theta$ and Lemma \ref{lemma:Bbound}.

Fix an input $x$. First, we need a bound on $\|\Delta\Phi^0(x)\|_2 = \|\Phi^0(x) - \widetilde\Phi^0(x)\|_2$. We have that $\Phi^0(x) = \phi(Y^0(x))$. Hence, letting $L$ be the Lipshitz constant of $\phi$, we have
$\|\Delta\Phi^0(x)\|_2 \leq L\|\Delta Y^0(x)\|_2$. Now, as $\Delta Y^0_j(x) = \frac{1}{\sqrt p}\sum_{k=1}^p\Delta\emme^0_{jk}x_k + \frac{1}{\sqrt p}\sum_{k=1}^p \Delta\esse^0_{jk}\zeta^0_{jk}x_k$, we have
$$\|\Delta \Phi^0(x)\|_2\leq\frac{L}{\sqrt p}(\|\Delta\emme^0\|_2 + \|\Delta\esse^0\odot\zeta^0\|_2)\|x\|_2 \,,$$
where $\odot$ denotes the Hadamard product.

Notice that we have
$$\E[\|\Delta\esse^0\odot\zeta^0\|^2_2]\leq \E[\|\Delta\esse^0\odot\zeta^0\|_{F,2}^2] =\sum_{j=1}^n\sum_{k=1}^p (\Delta\esse^0_{jk})^2\E[(\zeta^0_{jk})^2] = \|\Delta\esse^0\|_{F,2}^2\leq J^2$$
uniformly in $\Bb_J$, where as usual the expectation $\E$ is the one with respect to the intrinsic stochasticity of the network, due to the $\zeta$'s.
We can define a constant $C\geq 0$, independent of $n$, such that
$$\E[\|\Delta\Phi^0(x)\|_2^2]\leq\frac{4L^2J^2\|x\|_2^2}{p} = C^2$$
uniformly in $\Bb_J$, as $\|\Delta\emme^0\|_2\leq \|\Delta\emme^0\|_{F,2}\leq J$. 

Now, recalling the definition of $\Theta$ and using that $\esse^1 = 1 +\Delta \esse^1$, we have
$$\Theta = \frac{1}{n}\sum_{j=1}^n\E[\Phi^0_j(x)^2]\min_{i=1\dots q}(\esse^1_{ij})^2\geq \frac{1}{n}\sum_{j=1}^n\E[\Phi^0_j(x)^2](1 - 2\min_{i=1\dots q}|\Delta\esse^1_{ij}|) \,.$$
We will show that $\frac{1}{n}\sum_{j=1}^n\E[\Phi^0_j(x)^2]\to \widetilde\Theta$ and $\frac{1}{n}\sum_{j=1}^n\E[\Phi^0_j(x)^2]\min_{i=1\dots q}|\Delta\esse^1_{ij}|\to 0$.

First notice that 
$$\left|\frac{1}{n}\E[\|\Phi^0(x)\|_2^2] - \frac{1}{n}\E[\|\widetilde\Phi^0(x)\|_2^2]\right| \leq \frac{2}{n}\E[|\widetilde\Phi^0(x)\cdot\Delta \Phi^0(x)|] +\frac{1}{n}\E[\|\Delta\Phi^0_j(x)\|_2^2]\,.$$
We know that $\frac{1}{n}\E[\|\widetilde\Phi^0(x)\|_2^2]=\widetilde\Theta$ by definition. On the other hand we have
\begin{align*}\frac{2}{n}\E[|\widetilde\Phi^0(x)\cdot\Delta \Phi^0(x)|]&\leq\frac{2}{n}\E[\|\widetilde\Phi^0(x)\|_2\|\Delta\Phi^0(x)\|_2]\\&\leq \frac{2 C}{\sqrt n}\left(\frac{1}{n}\E[\|\widetilde\Phi^0(x)\|_2^2]\right)^{1/2}=\frac{2 C\widetilde\Theta^{1/2}}{\sqrt n}=O_{\hat\Pp}(1/\sqrt n) \,.\end{align*}
Since $\frac{1}{n}\E[\|\Delta\Phi^0_j(x)\|_2^2]\leq C^2/n$, we have that $\frac{1}{n}\sum_{j=1}^n\E[\Phi^0_j(x)^2]-\widetilde\Theta\to 0$ uniformly in $\Bb_J$, in probability with respect to the random initialisation $\hat\Pp$.

We still need to show that $\frac{1}{n}\sum_{j=1}^n\E[\Phi^0_j(x)^2]\min_{i=1\dots q}|\Delta\esse^1_{ij}|\to 0$. Again we can decompose the term in $\Phi^0$ and we have
\begin{align*}\frac{1}{n}\sum_{j=1}^n\E[\Phi^0_j(x)^2]&\min_{i=1\dots q}|\Delta\esse^1_{ij}| \\&= \frac{1}{n}\sum_{j=1}^n(\E[\widetilde\Phi^0_j(x)^2]+2\E[\widetilde\Phi^0_j(x)\Delta\Phi^0_j(x)]+\E[\Delta\Phi^0_j(x)^2])\min_{i=1\dots q}|\Delta\esse^1_{ij}| \,.\end{align*}
Clearly, for every $j$ we have $\min_{i=1\dots q}|\Delta\esse^1_{ij}|\leq J$, and so we can write
\begin{align*}\frac{1}{n}\sum_{j=1}^n\E[\Phi^0_j(x)^2]\min_{i=1\dots q}|\Delta\esse^1_{ij}| &\leq \frac{1}{n}\sum_{j=1}^n\E[\widetilde\Phi^0_j(x)^2]\min_{i=1\dots q}|\Delta\esse^1_{ij}| \\&+ \frac{2J}{n}(\E[|\widetilde\Phi^0(x)\cdot\Delta\Phi^0(x)|] +\E[\|\Delta\Phi^0(x)\|_2^2])\end{align*}
uniformly in $\Bb_J$. We know already that $\frac{1}{n}(2\E[|\widetilde\Phi^0(x)\cdot\Delta\Phi^0(x)|] +\E[\|\Delta\Phi^0(x)\|_2^2])=O_{\hat\Pp}(1/\sqrt n)$. As for the other term, we have
$$\frac{1}{n}\sum_{j=1}^n\E[\widetilde\Phi^0_j(x)^2]\min_{i=1\dots q}|\Delta\esse^1_{ij}| \leq \frac{1}{\sqrt n}\left(\frac{1}{n}\sum_{j=1}^n\E[\widetilde\Phi^0_j(x)^4]\right)^{1/2}\left(\sum_{j=1}^n\min_{i=1\dots q}(\Delta\esse^1_{ij})^2\right)^{1/2}\,.$$
Using an argument analogous to that in the proof of Proposition \ref{prop:init}, we have that $\frac{1}{n}\sum_{j=1}^n\E[\widetilde\Phi^0_j(x)^4]$ has a finite limit (in probability wrt $\hat\Pp$). On the other hand, we have
$$\sum_{j=1}^n\min_{i=1\dots q}(\Delta\esse^1_{ij})^2 \leq \sum_{j=1}^n\sum_{i=1}^q(\Delta\esse^1_{ij})^2 = \|\Delta\esse^1\|_{F,2}^2\leq J^2\,.$$
We have thus obtained that $\frac{1}{n}\sum_{j=1}^n\E[\widetilde\Phi^0_j(x)^2]\min_{i=1\dots q}|\Delta\esse^1_{ij}|=O_{\hat\Pp}(1/\sqrt n)$, and so we can conclude that $\Theta = \Omega_{\hat\Pp}(1)$, uniformly in $\Bb_J$ and in probability wrt the random initialisation $\hat\Pp$. 

Now, we will show that $G= O_{\hat\Pp}(1)$. We have
$$G = \frac{1}{n}\sum_{j=1}^n\E[\|X_j\|_2^3]\leq \frac{4}{n}\sum_{j=1}^n\E[\|\widetilde X_j\|_2^3] + \frac{4}{n}\sum_{j=1}^n\E[\|\Delta X_j\|_2^3]\,.$$
Let us write $X_j = U_j + V_j$, with $U_j = (\zeta^1_{ij}\Phi^0_j(x))_{i=1\dots q}$ and $V_j = (\emme^1_{ij}(\Phi^0_j(x)-\E[\Phi^0_j(x)]))_{i=1\dots q}$. Then $\|\Delta X_j\|_2^3 \leq 4(\|\Delta U_j\|_2^3 + \|\Delta V_j\|_2^3)$.

First, denoting as $\zeta^1_j$ and $\Delta\esse^1_j$ the vectors $(\zeta^1_{ij})_{i=1\dots q}$ and $(\Delta\esse^1_{ij})_{i=1\dots q}$, we can write 
$$\Delta U_j = \Delta\Phi^0_j(x)\zeta^1_j +\widetilde\Phi^0_j(x) \Delta\esse^1_j\odot\zeta^1_j + \Delta\Phi^0_j(x)\Delta\esse^1_j\odot\zeta^1_j\,,$$
where $\odot$ represents the Hadamart product. $\Phi^0$ and $\zeta^1$ are independent and $\E[|\zeta|^3] = 2\sqrt{2/\pi}<2$ for $\zeta\sim\mathcal N(0,1)$, so we have
$$\E[\|\Delta U_j\|_2^3]\leq 54(q^{3/2}\E[|\Delta\Phi^0_j(x)|^3] + \E[|\widetilde\Phi^0_j(x)|^3]\E[\|\Delta\esse^1_j\|_2^3] + \E[|\Delta\Phi^0_j(x)|^3]\E[\|\Delta\esse^1_j\|_2^3])\,.$$
Using that $\|\Delta\Phi^0(x)\|_3^3\leq \|\Delta\Phi^0(x)\|_2^3\leq C^3$, we have that 
$$\frac{1}{n}\sum_{j=1}^nq^{3/2}\E[|\Delta\Phi^0_j(x)|^3]\leq \frac{q^{3/2}C^3}{n}$$ uniformly in $\Bb_J$.
Then we can notice that $\|\Delta\esse^1_j\|_2 \leq \|\mathbf 1_q\|_3\|\Delta\esse^1_j\|_6 = q^{1/3}\|\Delta\esse^1_j\|_6$ by H\"older's inequality. Hence
\begin{align*}\frac{1}{n}\sum_{j=1}^n \E[|\widetilde\Phi^0_j(x)|^3]\E[\|\Delta\esse^1_j\|_2^3]&\leq\frac{q\|\Delta\esse^1\|_{F,6}^3}{\sqrt n}\left(\frac{1}{n}\sum_{j=1}^n\E[|\widetilde\Phi^0_j(x)|^6]\right)^{1/2}\\&\leq \frac{qJ^3}{\sqrt n}\left(\frac{1}{n}\sum_{j=1}^n\E[|\widetilde\Phi^0_j(x)|^6]\right)^{1/2}=O_{\hat\Pp}(1/\sqrt n)\end{align*}
uniformly in $\Bb_J$, where the last equality comes from the usual argument that $\frac{1}{n}\sum_{j=1}^n\E[|\widetilde\Phi^0_j(x)|^6]$ has a finite limit in probability (with respect to the random initialisation).

Finally, we can notice that $|\Phi^0_j(x)|\leq\|\Phi^0(x)\|_2\leq C$ for all $j$, so that
$$\frac{1}{n}\sum_{j=1}^n\E[|\Delta\Phi^0_j(x)|^3]\E[\|\Delta\esse^1_j\|_2^3]\leq \frac{q^{1/2}}{n}C^3\|\Delta\esse^1\|_{F,3}^3\leq\frac{q^{1/2}C^3J^3}{n}$$
uniformly in $\Bb_J$, where we used that $\|\Delta\esse^1_j\|_2 \leq \|\mathbf 1_q\|_6\|\Delta\esse^1_j\|_3 = q^{1/6}\|\Delta\esse^1_j\|_3$ by H\"older's inequality, and that $\|\Delta\esse^1\|_{F,3}\leq \|\Delta\esse^1\|_{F,2}\leq J$. We can hence conclude that 
$$\frac{1}{n}\sum_{j=1}^n\E[\|\Delta U_j\|_2^3] = O_{\hat\Pp}(1/\sqrt n)$$
uniformly in $\Bb_J$.

Now we need to bound $\|\Delta V_j\|_2$. Letting $\emme^1_j = (\emme^1_{ij})_{i=1\dots q}$ and $\delta\Phi^0_j(x) = \Phi^0_j(x)-\E[\Phi^0_j(x)]$, it can be easily shown that
$$\|\Delta V_j\|_2 \leq |\delta\widetilde\Phi^0_j(x)|\|\Delta \emme^1_j\|_2 + |\Delta\delta\Phi^0_j(x)|\|\widetilde \emme^1_j\|_2 + \|\Delta \emme^1_j\|_2|\Delta\delta\Phi^0_j(x)|\,.$$
So we have
\begin{align*}\frac{1}{n}\sum_{j=1}^n\E[\|V_j\|_2^3]&\leq \frac{27}{n}\sum_{j=1}^n\E[|\delta\widetilde\Phi^0_j(x)|^3]\|\Delta \emme^1_j\|_2^3\\&+\frac{27}{n}\sum_{j=1}^n\E[|\Delta\delta\Phi^0_j(x)|^3]\|\widetilde \emme^1_j\|_2^3+\frac{27}{n}\sum_{j=1}^n\E[|\Delta\delta\Phi^0_j(x)|^3]\|\Delta \emme^1_j\|_2^3\,.\end{align*}

Starting from the first term, we have that
$$\sum_{j=1}^n\E[|\delta\widetilde\Phi^0_j(x)|^3]\|\Delta \emme^1_j\|_2^3\leq \left(\sum_{j=1}^n\E[|\delta\widetilde\Phi^0_j(x)|^6]\right)^{1/2} \left(\sum_{j=1}^n\|\Delta\emme^1_j\|_2^6\right)^{1/2}\,.$$
From H\"older's inequality we have $\|\Delta\emme^1_j\|_2\leq\|\mathbf 1_q\|_3\|\Delta\emme_j\|_6= q^{1/3}\|\Delta\emme_j\|_6$ and hence
\begin{align*}\frac{1}{n}\sum_{j=1}^n\E[|\delta\widetilde\Phi^0_j(x)|^3]\|\Delta \emme^1_j\|_2^3&\leq\frac{q\|\Delta\emme^1\|_{F,6}^3}{\sqrt n}\left(\frac{1}{n}\sum_{j=1}^n\E[|\delta\widetilde\Phi^0_j(x)|^6]\right)^{1/2}\\&\leq \frac{qJ^3}{\sqrt n}\left(\frac{1}{n}\sum_{j=1}^n\E[|\delta\widetilde\Phi^0_j(x)|^6]\right)^{1/2}=O_{\hat\Pp}(1/\sqrt n)\end{align*}
uniformly in $\Bb_J$, as $\frac{1}{n}\sum_{j=1}^n\E[|\delta\widetilde\Phi^0_j(x)|^6]$ tends in probability (wrt the random initialisation) to a finite limit.

Proceeding analogously, and noting that the $L$-Lipschitzianity of $\phi$ implies that $\E[|\Delta\delta\Phi^0_j(x)|^3]\leq 8L^3\E[|\Delta Y^0_j(x)|^3]$, we get
$$\frac{1}{n}\sum_{j=1}^n\E[|\Delta\delta\Phi^0_j(x)|^3]\|\widetilde \emme^1_j\|_2^3\leq \frac{8C^3}{\sqrt n}\left(\frac{1}{n}\sum_{j=1}^n\E[\|\widetilde\emme^1_j\|_2^6]\right)^{1/2}=O_{\hat\Pp}(1/\sqrt n)$$
uniformly in $\Bb_J$, and again we used the fact that $\frac{1}{n}\sum_{j=1}^n\E[\|\widetilde\emme^1_j\|_2^6]$ converges in probability (wrt the random initialisation) to a finite quantity to show that the above expression is of order $O_{\hat\Pp}(1/\sqrt n)$. 

Finally, in a similar way we get
$$\frac{1}{n}\sum_{j=1}^n\E[|\Delta\delta\Phi^0_j(x)|^3]\|\Delta \emme^1_j\|_2^3\leq \frac{8qJ^3C^3}{n}$$
uniformly in $\Bb_J$. We can hence conclude that $$\frac{1}{n}\sum_{j=1}^n\E[\|V_j\|_2^3]= O_{\hat\Pp}(1/\sqrt n)$$
and so that, as $n\to\infty$, $G\leq O_{\hat\Pp}(1)$, uniformly in $\Bb_J$ and in probability with respect to the random initialisation. This ends the proof.
\end{proof}
\section{Multiclass classification (\texorpdfstring{$q>2$}{q>2})}\label{app:multiclass}
In the framework of Section \ref{sec:01training}, things get more complicated when there are more than two classes. We can write 
$$\E[\ell(\hat f(x),i^\star)] = \Pp\left(F_{i^\star}(x) \leq \max_{i\neq i^\star} F_{i}(x)\right) = 1-\Pp\left(F_{i^\star}(x)> \max_{i\neq i^\star} F_i(x)\right)\,.$$
Hence, given a $q$-dimensional Gaussian vector $Y\sim \mathcal N(M, Q)$, we need to find an estimate of $\Pp(Y_{i^\star}>\max_{i\neq i^\star}Y_i)$.

The most trivial estimator would consist of sampling different realisations of $Y$ and then give a MC estimate. However, as we are interested in the gradient of the expected loss, this method will not work. Indeed, the gradient of this estimate is the sum of the gradients of the 01-loss of each sample. As all these gradients are null, we do not obtain anything informative. We have thus to proceed in a less naive way.

Let us assume that $i^\star = q$ (the largest label). Hence, we will focus on $\Pp(Y_q> \max_{i<q} Y_i)$. With a Cholesky-like algorithm,  we can find a lower triangular matrix $A$ such that $Y \sim A X + M$,
where $X\sim \mathcal N(0, \text{Id})$. We have $Y_i = \sum_{i'=1}^{q} A_{ii'}X_{i'} + M_i$ and $A_{iq}=0$ for $i=1\dots (q-1)$, while $A_{qq}>0$. For $i<q$, we can write
$$\Pp(Y_q>Y_i) = \Pp\left(X_q>\sum_{i'=1}^{q-1} \frac{A_{ii'}-A_{qi'}}{A_{qq}}\,X_{i'} + \frac{M_i-M_q}{A_{qq}}\right)\,.$$
Let us define the $(q-1)$ dimensional random vector $\tilde X$ as $\tilde X = \tilde A X + \tilde M$, where $\tilde A$ is a $(q-1)\times q$ matrix and $\tilde M$ is a $(q-1)$ vector, whose elements are given by $\tilde A_{ii'} = \frac{A_{ii'}-A_{qi'}}{A_{qq}}$ and  $\tilde M_i = \frac{M_i-M_q}{A_{qq}}$ repectively. With this notation, we have $\Pp(Y_q>Y_i) = \Pp(X_q>\tilde X_i)$. Now, we have gained that $X_q$ is independent from all the other $X_i$'s, and so from all the $\tilde X_i$'s. In short, $(X_q|\tilde X) = X_q\sim \mathcal N(0,1)$. So, we can write 
$$\Pp\left(Y_q>\max_{i<q} Y_i\right) = \Pp\left(X_q>\max_{i<q} \tilde X_i\right) = \E\left[\Pp\left(X_q>\max_{i<q} \tilde X_i\bigg|\tilde X\right)\right]\,.$$
Now, if we let $\psi(u) = \tfrac{1}{2}(1-\text{erf}\,(u/\sqrt 2))$, we get
$$\Pp\left(Y_q>\max_{i<q} Y_i\right) = \E\left[\psi\left(\max_{i<q} \tilde X_i\right)\right]\,.$$
We can estimate the above expression with MC sampling. Note that it is almost everywhere differentiable with respect to the components of $M$ and $Q$ (as the Cholesky transform is differentiable) and the gradient with respect to $M$ and $Q$ is not trivially null.

Finally, for a general $i^\star\in\{1,\dots, q
\}$, we can get $\Pp(Y_{i^\star}>\max_{i\neq i^\star}Y_i)$ by simply performing a swap of the two labels $i^\star$ and $q$, and then apply the method for $i^\star=q$. 

\section{Expected values for \texorpdfstring{$\ReLU$}{ReLU} and \texorpdfstring{$\sin$}{sin} activations}\label{app:relusin}
Let $a>0$, $b\in\R$, $\zeta\sim\mathcal N(0,1)$. The following formulae are easily verified by direct calculations:
\begin{align*}
    &\E[\sin(a\zeta + b)] = e^{-a^2/2}\sin b\,;\\
    &\E[\sin(a\zeta + b)^2] = \frac{1}{2}(1-e^{2a^2}\cos(2b))\,;\\
    &\E[\ReLU(a\zeta + b)] = \frac{ae^{-b^2/(2a^2)}}{\sqrt{2\pi}}+\frac{b}{2}\left(1+\erf\frac{b}{a\sqrt 2}\right)\,;\\
    &\E[\ReLU(a\zeta + b)^2] = \frac{abe^{-b^2/(2a^2)}}{\sqrt{2\pi}} + \frac{1}{2}(a^2+b^2)\left(1+\erf\frac{b}{a\sqrt 2}\right)\,.
\end{align*}
\section{Experimental details}\label{app:exp}
In all the experiments, the training consisted of optimising some PAC-Bayesian bound via SGD with momentum parameter $0.9$. The PAC parameter $\delta$ was always chosen equal to $0.025$. We only performed the training of the means $\emme$ and all the networks considered had no bias. The priors corresponded to the initialisation of the network \eqref{eq:init}. Note that in our implementation, the scaling factors $1/\sqrt p$ and $1/\sqrt n$ were absorbed in the hyper-parameters, so that we performed the gradient descent on $\mu^0 = \emme^0/\sqrt p$ and $\mu^1 = \emme^1/\sqrt n$ (the standard deviations were kept fixed).

For the binary MNIST experiments, the digits from $0$ to $4$ were relabelled as $0$ and those from $5$ to $9$ as $1$. The training dataset used was the standard one for MNIST, consisting of $m=60000$ datapoints. For the ``standard'' PAC-Bayesian methods, the objectives used are those reported in \eqref{eq:objs}. For the objective $\mathtt{lbd}$ we proceeded by alternating the optimisation of the network hyper-parameters with that of $\lambda$, as in \cite{perezortiz2021tighter}, always enforcing $\lambda\in(0,1)$. The ``Gaussian'' training was performed with the optimisation objective \eqref{eq:objbound}. All of these methods were used to train the same stochastic network, initialised as in \eqref{eq:init}. We tried two different learning rate (LR) schedules, the first consisting of $10000$ epochs with LR $\eta=10^{-5}$ and the second of $100$ epochs with $\eta=10^{-2}$, followed by $1000$ epochs with $\eta=10^{-3}$ and $5000$ epochs with $\eta=10^{-4}$. In Table \ref{tab:binmnist} in the main text we report the results of the training schedule achieving the tightest bound, that is the multi-LR schedule for $\mathtt{invkl}$ and $\mathtt{quad}$, and the single-LR schedule for $\mathtt{McAll}$ and $\mathtt{lbd}$.

For the full MNIST experiments, again we used the standard training dataset with $m=60000$ datapoints. For the ``standard'' methods, $L_S(\Q)$ in \eqref{eq:objs} was a bounded version of the cross-entropy: we fixed $p_0=10^{-5}$ and constrained the probabilities in the definition of the cross-entropy to be greater or equal than $p_0$, see \citep{perezortiz2021tighter} for more details. In this way, the loss is bounded by $\log(1/p_0)$, and by rescaling it of the same factor we can get a loss bounded in $[0,1]$. $L_S(\Q)$ is the empirical average of this quantity on the training dataset. As we previously did for the binary MNIST experiment, during the training we estimated $L_S(\Q)$ by sampling once per iteration the hyper-parameters of the network. The ``Gaussian'' method used the objective \eqref{eq:objbound}, where $L_S(\Q)$ is the expected empirical 01-loss. As we are dealing with multiclass classification we do not have a simple expression for the 01-loss, so we used the method described in Appendix \ref{app:multiclass}. Per each iteration, the loss was evaluated by an MC estimate averaging $10^4$ independent realisations. For all the methods, the training consisted of $10000$ epochs with learning rate $\eta=10^{-5}$.

\end{document}